\newcommandx{\unsure}[2][1=]{\todo[linecolor=red,backgroundcolor=red!25,bordercolor=red,#1]{#2}}
\newcommandx{\change}[2][1=]{\todo[linecolor=blue,backgroundcolor=blue!25,bordercolor=blue,#1]{#2}}
\newcommandx{\info}[2][1=]{\todo[linecolor=OliveGreen,backgroundcolor=OliveGreen!25,bordercolor=OliveGreen,#1]{#2}}
\newcommandx{\improvement}[2][1=]{\todo[linecolor=Plum,backgroundcolor=Plum!25,bordercolor=Plum,#1]{#2}}
\newcommandx{\thiswillnotshow}[2][1=]{\todo[disable,#1]{#2}}
\newtheorem{theorem}{Theorem}
\DeclareMathOperator{\Tr}{Tr}
\newcommand*{\mylength}{1.6in}
\newcommand*{\mylen}{1.6in}
\newcommand*{\figbarlen}{1.6in}
\title{\LARGE \bf
A Convex Polynomial Force-Motion Model for Planar Sliding:\\ Identification and Application 
}
\author{Jiaji Zhou, Robert Paolini,  J. Andrew Bagnell and Matthew T. Mason  \\
The Robotics Institute, Carnegie Mellon University \\
\{jiajiz, rpaolini, dbagnell, matt.mason\}@cs.cmu.edu
}
\begin{document}
\message{ !name(lc_learning.tex) !offset(-2) }

\maketitle
\thispagestyle{empty}
\pagestyle{empty}

\begin{abstract}
We propose a polynomial force-motion model for planar sliding. 
The set of generalized friction loads is the 1-sublevel set of a polynomial whose gradient directions correspond to generalized velocities. 
Additionally, the polynomial is confined to be convex even-degree homogeneous in order to obey the maximum work inequality, symmetry, shape invariance in scale, and fast invertibility.
We present a simple and statistically-efficient model identification
procedure using a sum-of-squares convex relaxation. 
Simulation and robotic experiments validate the accuracy and efficiency of our approach. 
We also show practical applications of our model including stable pushing of objects and free sliding dynamic simulations. 

\end{abstract}

\section{INTRODUCTION}
We develop a data-driven but physics-based method for modeling planar friction.    
Manipulations employing friction are ubiquitous in tasks including positioning and orienting objects by pushing \cite{Mason1986a, Lynch1996e, Akella1998, Dogar2010pgd}, controlled slip with dexterous hands \cite{Cole1992a} and assembly of tight-fitting parts \cite{Whitney1983b}.
In the case of planar robot pushing, indeterminacy of the
pressure distribution between the object and support surface leads to
uncertainty in the resultant velocity given a particular push
action. Despite such inherent difficulty, algorithms and analysis have been developed with provable guarantees. Mason \cite{Mason1986a} derived the voting theorem to determine the sense of rotation of an object pushed by a point contact. Lynch and Mason \cite{Lynch1996e} developed a stable pushing strategy when objects remain fixed to the end effector with two or more contact points.    
However, minimal assumptions on friction conditions inherently lead to 
conservative strategies. By explicitly modeling and identifying the friction space, \todo{Done. load space?}
we can improve strategies for planning and control. 
Our contribution lies in developing a precise and statistically-efficient
(i.e., requiring only a few collected force-velocity data pairs) model with a computationally efficient identification procedure. Fig. \ref{fig:pipeline} illustrates an outline of the paper.
We assume a quasi-static regime \cite{Mason1986} where forces and moments are balanced with negligible inertia effects. 
\begin{figure}[!h]
\centering
\includegraphics[width=\columnwidth]{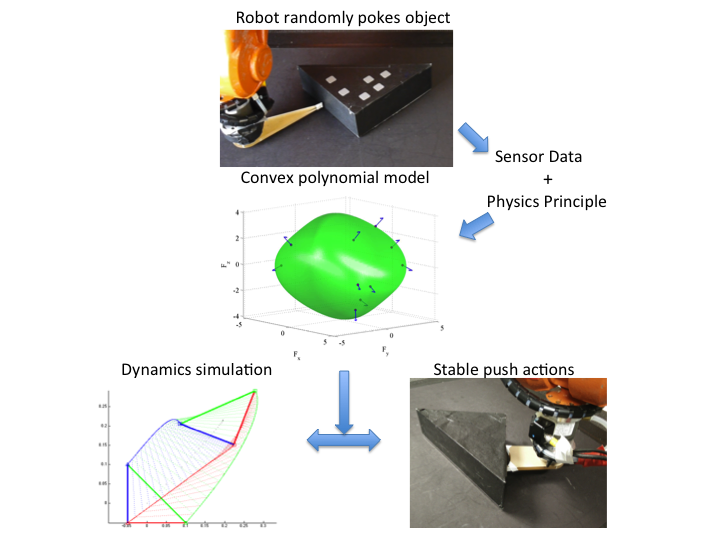}
\caption{\small The robot randomly pokes the object of known shape with a
  point finger to collect force-motion data. We then optimize a
  convex polynomial friction representation with physics-based constraints. Based on the representation, we
  demonstrate applications of stable pushing and
  dynamic sliding simulation.}
\label{fig:pipeline}
\vspace{-0.2in}
\end{figure}

\section{Background on Planar Force-Motion Models}
The classical Coulomb friction law states that for a point contact with
instantaneous planar velocity \todo{Done. Why direction?} $\mathbf{v} = [v_x, v_y]^T$, the incurred
friction force $\mathbf{f} =
[f_x, f_y]^T$ the point \todo{contact} applies on the surface is parallel to $\mathbf{v}$, i.e., $\mathbf{f}/{|\mathbf{f}|} = \mathbf{v}/{|\mathbf{v}|}$. 
We refer the readers to \cite{Mason1986a} for details of friction analysis for
planar sliding under isotropic Coulomb friction law.
In this paper, we build our analysis on a generalized friction law formulated
first in \cite{moreau1988unilateral}, in which $\mathbf{v}$
and $\mathbf{f}$ may not be parallel, but only need to obey the
maximum work inequality: \todo{This expression is for convenience of the follow-up discussion of rigid body extension.JAB: make the maximum part explicit. Sup or max.}
\begin{align}
(\mathbf{f} - \mathbf{f'}) \cdot \mathbf{v} \geq 0,  \label{eq:point_max_power}
\end{align}
where $\mathbf{f'}$ is an arbitrary element from the set of all possible
static and kinetic friction forces. 

Let $\mathbf{V} = [V_x, V_y, \omega]^T$ be the instantaneous generalized
velocity and $\mathbf{F} = [F_x, F_y, \tau]^T$ be the generalized friction load for a rigid body sliding on a planar surface with a contact area $R$.
Both $\mathbf{V}$ and $\mathbf{F}$ are in the local body frame\footnote{Throughout the paper, we use a local coordinate frame with the origin set as the projection of the COM onto the supporting surface. However, the choice of the origin can be any other point of convenience.}. $\mathbf{F}$ can be computed by integration over $R$:
\begin{align}
F_x  = \int_{R} f_{ax}\,da, \quad F_y & = \int_{R} f_{ay}\,da, \quad \tau = \int_{R} (r_{ax} f_{ay} - r_{ay} f_{ax})\,da.
\end{align}  
The maximum work inequality in equation
(\ref{eq:point_max_power}) can be extended for
generalized friction load $\mathbf{F}$ and velocity $\mathbf{V}$:
\begin{align}
\mathbf{F} \cdot \mathbf{V} & = \int_{R} f_{ax} (V_x - \omega r_{ay})\,da +   \int_{R} f_{ay} (V_y + \omega r_{ax}) \,da \nonumber \\
& = \int_{R} f_{ax} v_{ax} + f_{ay} v_{ay} \,da =  \int_{R} \mathbf{f}_{a}\cdot \mathbf{v}_{a} \,da \geq \mathbf{F'} \cdot \mathbf{V}, 
\end{align}
among any other possible friction load $\mathbf{F'}$.  
Due to the converse supporting hyperplane theorem \cite{boyd2004convex},
the set of all generalized friction loads form a convex set $\mathcal{F}$. 
An important work that inspires us is Goyal et al. \cite{Goyal1991} who found that 
all possible generalized friction loads during sliding form a
limit surface (LS) constructed from the Minkowsky sum of limit curves at
individual support points. Points inside the surface correspond to static friction loads. 
Points on the surface correspond to friction loads with normals parallel to sliding velocity directions,
forming a mapping between generalized friction load and sliding
velocity. An ideal LS is always convex due to the maximum
work inequality but may not be strictly convex when a single point
supports finite pressure. As shown in Fig. \ref{fig:robot_lc_ideal},
facets can occur since the object can rotate about one of the three
support points whose velocity is zero with indeterminate underlying friction.

Erdmann \cite{erdmann1994representation} proposed a configuration
space embedding of friction. In his work, the third component of $\mathbf{F}$ is $F_z = \tau/\rho$ and the third component of $\mathbf{V}$ is $V_z = \omega\rho$, where $\rho$ is the radius of gyration. In doing so, all three components in $\mathbf{F}$ and $\mathbf{V}$ have the same unit. Observe that such normalized representation also obeys maximum work inequality with $\rho$ being any characteristic length. In our experiments, we have found that the normalized representation yields better numerical condition and different values of $\rho$ including radius of gyration, average edge length and minimum enclosing circle radius lead to similar performance.

\section{RELATED WORK} 
Yoshikawa and Kurisu \cite{Yoshikawa1991} solved an unconstrained least-squares problem to estimate the center of friction and the pressure
distribution over discrete grids on the contact surface. 
With similar set up, Lynch \cite{Lynch1993} proposed a constrained linear 
programming procedure to avoid negative pressure assignment.
However, methods based on discretization of the support surface introduce two sources of error in both localization of support points and pressure assignment among those points. 
We do not need to estimate the exact location of
support points. Coarse discretization loses accuracy while fine discretization unnecessarily increases the dimensionality
of estimation and model complexity. Howe and Cutkosky
\cite{howe1996practical} presented an ellipsoid approximation of the
limit surface assuming known pressure distribution. The ellipsoid was
constructed by computing or measuring the major axis lengths (maximum
force during pure translation and maximum torque during pure
rotation). Facets can be added by intersecting the ellipsoid with
planes determined by each support point. The pressure distribution
(except for 3 points support with known center of pressure),
nevertheless, is non-trivial to measure or compute. We also show that the ellipsoid approximation, as a special
case of our convex polynomial representation, is less accurate due to lack of expressiveness. 

Recent data-driven attempts \cite{kopicki2011learning,
  omrcen2009autonomous} collected visual data from random push trials and applied ``off-the-shelf''
machine learning algorithms to build motion models.  
We also embrace a data-driven strategy but bear in mind that physics
principles should guide the design of the learning algorithm (as
constraints and/or priors), hence reducing sample complexity and
increasing generalization performance. 

\section{REPRESENTATION AND IDENTIFICATION}
In this section, we propose the sublevel set representation of
friction with desired properties and show that convex even-degree homogeneous polynomials are valid solutions. Then we formulate an efficient convex optimization procedure to identify such polynomials.
\subsection{Polynomial sublevel set representation}
Let $H(\mathbf{F})$ be a differentiable convex function that models the generalized friction load and velocity as follows: 
\begin{itemize}
\item The $1$-sublevel set $L_{1}^{-}(H) = \{\mathbf{F}: H(\mathbf{F}) \leq 1\}$ corresponds to the convex set $\mathcal{F}$ of all generalized friction loads. 
\item The $1$-level set $L_{1}(H) = \{\mathbf{F}: H(\mathbf{F})=1\}$
  corresponds to generalized friction loads (during slip) on the boundary surface of $\mathcal{F}$.
\item The surface normals given by gradients $\{\nabla H(\mathbf{F}): \mathbf{F} \in L_{1}(H)\}$ represent instantaneous generalized velocity directions during slip, i.e.,
$V = s\nabla H(\mathbf{F})$ where $s>0$.  
\end{itemize}

\begin{theorem} \label{theorem:levelset}
The set of friction loads represented by the $1$-sublevel set of a differentiable convex function follows the maximum work inequality.
\end{theorem}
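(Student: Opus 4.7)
The plan is to exploit the first-order characterization of differentiable convex functions, since this is exactly the tool that connects the sublevel-set description of $\mathcal{F}$ to the gradient description of the velocity. The goal is to show that for any actual slip load $\mathbf{F}$ on the boundary $L_1(H)$ with associated velocity $\mathbf{V} = s\nabla H(\mathbf{F})$, $s>0$, and any competing feasible load $\mathbf{F'} \in L_1^{-}(H)$, the inequality $(\mathbf{F} - \mathbf{F'})\cdot \mathbf{V} \geq 0$ holds.

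First, I would invoke the standard first-order convexity inequality applied at $\mathbf{F}$, namely $H(\mathbf{F'}) \geq H(\mathbf{F}) + \nabla H(\mathbf{F}) \cdot (\mathbf{F'} - \mathbf{F})$. Then I would plug in the two membership facts that the theorem gives us for free: $H(\mathbf{F}) = 1$ because $\mathbf{F} \in L_1(H)$, and $H(\mathbf{F'}) \leq 1$ because $\mathbf{F'} \in L_1^{-}(H)$. Combining these gives $1 \geq 1 + \nabla H(\mathbf{F}) \cdot (\mathbf{F'} - \mathbf{F})$, which rearranges to $\nabla H(\mathbf{F}) \cdot (\mathbf{F} - \mathbf{F'}) \geq 0$.

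Finally, I would multiply through by the positive scalar $s$ and use $\mathbf{V} = s\nabla H(\mathbf{F})$ to conclude $(\mathbf{F} - \mathbf{F'}) \cdot \mathbf{V} \geq 0$, which is exactly the maximum work inequality in the generalized form appearing earlier in the paper. The case where $\mathbf{F}$ lies strictly inside $L_1^{-}(H)$ (i.e., static, no slip) is trivial because $\mathbf{V} = \mathbf{0}$ by convention, so both sides vanish.

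There is no real obstacle here; the result is essentially a restatement of the supporting hyperplane interpretation of the gradient of a convex function. The only subtlety worth flagging is that the argument implicitly requires $\nabla H(\mathbf{F}) \neq \mathbf{0}$ on $L_1(H)$ for the velocity direction to be well-defined, a mild nondegeneracy condition that the paper's later restriction to homogeneous even-degree convex polynomials automatically enforces away from the origin.
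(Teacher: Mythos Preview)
Your proposal is correct and follows essentially the same argument as the paper: both use the first-order convexity inequality $H(\mathbf{F}') \geq H(\mathbf{F}) + \nabla H(\mathbf{F})\cdot(\mathbf{F}'-\mathbf{F})$ together with $H(\mathbf{F})=1$, $H(\mathbf{F}')\leq 1$, and $\mathbf{V}=s\nabla H(\mathbf{F})$ to obtain the inequality, handling the static case trivially. Your added remark about the nondegeneracy condition $\nabla H(\mathbf{F})\neq 0$ on $L_1(H)$ is a reasonable observation that the paper leaves implicit.
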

\begin{proof}
 When the object remains static, $\mathbf{F}$ belongs to the interior of $L_{1}^{-}(H)$ and $\mathbf{V}$ equals zero, the inequality holds
 as equality. When the object slips, $\mathbf{F}\in
 L_{1}(H)$ and $\mathbf{V}$ is nonzero,  we have for any other generalized friction load $\mathbf{F'}\in L_{1}^{-}(H)$:
\begin{align}
  \mathbf{V}\cdot(\mathbf{F'} - \mathbf{F}) = s(\nabla
  H(\mathbf{F})\cdot(\mathbf{F}' - \mathbf{F}))  \nonumber 
  \leq s(H(\mathbf{F}') - H(\mathbf{F})) \leq 0, 
\end{align}
where the first inequality is due to the convexity of $H(\mathbf{F})$.      
\end{proof}

In addition to enforcing convexity (discussed in \ref{sec:sos_convex}), we choose $H(\mathbf{F})$ to obey the following properties:
\begin{enumerate}
\item Symmetry: $H(\mathbf{F}) = H(-\mathbf{F})$ and $\nabla H(\mathbf{F}) = - \nabla H(-\mathbf{F})$.
\item Scale invariance: $\nabla H(a\mathbf{F}) = g(a)\nabla H(\mathbf{F})$, where $g(a)$ is a positive scalar function. 
\item Efficient invertibility: there exists efficient numerical procedure to find a $\mathbf{F}\in L_{1}(H)$ such that $\nabla H(\mathbf{F})/ \|\nabla H(\mathbf{F})\|  = \mathbf{V}$ for a given query unit velocity $\mathbf{V}$. We denote such operation as $\mathbf{F} = H_{inv}(\mathbf{V})$.
\end{enumerate}
Symmetry is based on the assumption that negating the velocity
direction would only result in a sign change in the friction
load. Scale invariance is desired for two reasons: 1) scaling in mass
and surface coefficient of friction could only result in a change of
scale but not other geometrical properties of the level-set representation;
and 2) predicting directions of generalized velocities (by computing
gradients and normalizing to a unit vector) only depends on the
direction of generalized force. Such a property is useful in the
context of pushing with robot fingers where applied loads are
represented by friction cones. The inverse problem of finding the
friction load for a given velocity naturally appears in seeking
quasi-static balance for stable pushing or computing deceleration during free sliding,
as shown in Section \ref{sec:application}. In general, efficient
numerical solution to the inverse problem, which our representation
enables, is key to planning and simulation.  One solution family for $H(\mathbf{F})$ that obeys these properties is the set of \textbf{strongly convex even-degree homogeneous polynomials}.

\begin{theorem} \label{theorem:homogcvx}
A strongly convex even degree-$d$ homogeneous polynomial $H(\mathbf{F};a)
=\sum_{i=1}^{m} a_iF_x^{i_1}F_y^{i_2}F_z^{d-i_1-i_2}$ with $m$ monomial terms\footnote{The number of different monomial
  terms $m$ is bounded by $d+2 \choose 2$.} parametrized by $a$
satisfies the properties of symmetry, scale invariance, and efficient invertibility.
\end{theorem}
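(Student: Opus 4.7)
My plan is to verify the three properties in turn, leveraging homogeneity of degree $d$ for symmetry and scale invariance and strong convexity for invertibility. The first two are one-line consequences of Euler-type identities; the real content lies in invertibility.

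For symmetry, I would apply the homogeneity identity $H(c\mathbf{F}) = c^{d} H(\mathbf{F})$ at $c = -1$: since $d$ is even, $H(-\mathbf{F}) = H(\mathbf{F})$. Each component of $\nabla H$ is itself a polynomial homogeneous of odd degree $d-1$, so $\nabla H(-\mathbf{F}) = -\nabla H(\mathbf{F})$. For scale invariance, with the scaling factor $a > 0$ (the physically relevant case, since $a$ plays the role of a mass or friction-coefficient multiplier), the same homogeneity argument gives $\nabla H(a\mathbf{F}) = a^{d-1}\nabla H(\mathbf{F})$, so one can take $g(a) = a^{d-1} > 0$.

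The main obstacle is efficient invertibility, which I would recast as a small convex program. Given a unit velocity $\mathbf{V}$, consider
\[
\max_{\mathbf{F}\in\mathbb{R}^{3}} \mathbf{V}^{T}\mathbf{F} \quad \text{subject to} \quad H(\mathbf{F}) \le 1.
\]
Strong convexity together with positive even-degree homogeneity makes $L_{1}^{-}(H)$ a compact strictly convex body containing the origin in its interior, so this program has a unique optimizer $\mathbf{F}^{*}$ lying on the boundary, where $H(\mathbf{F}^{*}) = 1$. KKT stationarity yields $\mathbf{V} = \mu\,\nabla H(\mathbf{F}^{*})$ with $\mu > 0$, and normalizing gives $\nabla H(\mathbf{F}^{*})/\|\nabla H(\mathbf{F}^{*})\| = \mathbf{V}$, which is exactly the defining equation of $\mathbf{F}^{*} = H_{\mathrm{inv}}(\mathbf{V})$. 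The program is smooth and strongly convex, so interior-point or Newton methods converge in a handful of iterations. An even lighter alternative exploits homogeneity directly: solve $\nabla H(\tilde{\mathbf{F}}) = \mathbf{V}$ by Newton iteration and rescale via $\mathbf{F}^{*} = \tilde{\mathbf{F}}/H(\tilde{\mathbf{F}})^{1/d}$ to land on the unit level set.

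The subtle step I expect to have to spell out is uniqueness of the level-set preimage of a given velocity direction together with positivity of $\mu$; both follow from strong convexity, which forbids flat facets on $L_{1}(H)$ where the outward normal would otherwise be ambiguous. Once this is in hand, polynomial-time solvability is standard and the invertibility claim is complete.
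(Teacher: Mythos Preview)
Your argument is correct; the symmetry and scale-invariance pieces match the paper, which also dismisses them as immediate from the even-degree homogeneous form. For invertibility, however, your primary route differs from the paper's. You frame $H_{\mathrm{inv}}$ as the support-function problem $\max\{\mathbf{V}^{T}\mathbf{F}: H(\mathbf{F})\le 1\}$ and read off the answer from KKT stationarity, appealing to strict convexity of the sublevel body for existence and uniqueness. The paper instead attacks the unconstrained residual $G(\mathbf{F})=\tfrac12\|\nabla H(\mathbf{F})-\mathbf{V}\|^{2}$, observes that $\partial G/\partial\mathbf{F}=\nabla^{2}H(\mathbf{F})\bigl(\nabla H(\mathbf{F})-\mathbf{V}\bigr)$, and uses positive definiteness of the Hessian (for $\mathbf{F}\neq 0$) to conclude that every stationary point of $G$ satisfies $\nabla H(\mathbf{F})=\mathbf{V}$ exactly; Gauss--Newton then drives $G$ to zero, and a final rescaling $\mathbf{F}\mapsto H(\mathbf{F})^{-1/d}\mathbf{F}$ lands on the 1-level set. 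Your closing ``lighter alternative'' is precisely this argument. What your convex-program formulation buys is a clean existence/uniqueness statement coming for free from compactness and strict convexity, together with off-the-shelf polynomial-time guarantees; what the paper's least-squares route buys is a constraint-free iteration with an explicit, cheap update $\mathbf{F}_{t+1}=\mathbf{F}_t-\nabla^{2}H(\mathbf{F}_t)^{-1}\bigl(\nabla H(\mathbf{F}_t)-\mathbf{V}\bigr)$ that exploits the special structure (invertible Jacobian) directly.
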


\begin{proof}
Proving symmetry and scale invariance are trivial due to the homogeneous and
even-degree form of $H(\mathbf{F})$. 
Here, we sketch the proof that efficient invertibility can be achieved
by first solving a simple non-linear least square problem followed by a rescaling.

Construct an objective function $G(\mathbf{F}) =
\frac{1}{2}\|\nabla H(\mathbf{F}) - \mathbf{V}\|^2$ whose gradient
$\frac{\partial G}{\partial \mathbf{F}} = \nabla^2H(\mathbf{F})
(\nabla H(\mathbf{F}) - \mathbf{V})$. Note that its
stationary point $F^*$, which iterative methods such as Gauss-Newton or trust-region algorithms will converge to, satisfies $\nabla H(\mathbf{F^*}) - \mathbf{V} = 0$. 
Hence $F^{*}$ is globally optimal with value zero.
Let $\Delta \mathbf{F}_t = {\nabla^2 H(\mathbf{F}_t)}^{-1}(\mathbf{V}_t - \mathbf{V})$, then the update rule for Gauss-Newton algorithm is $\mathbf{F}_{t+1} = \mathbf{F}_{t} - \Delta \mathbf{F}_t$. 
Although the final iteration point $\mathbf{F}_T$ may not lie on the $1$-level set of $H(\mathbf{F})$, we can scale $\mathbf{F}_T$ by $\hat{\mathbf{F}}_T = H(\mathbf{F}_T)^{-1/d} \mathbf{F}_T$ such that $H(\hat{\mathbf{F}}_T) = 1$ and $\nabla H(\hat{\mathbf{F}}_t) / \|\nabla H(\hat{\mathbf{F}}_t)\| = V$ due to the homogeneous form of $H(\mathbf{F})$. Therefore $H_{inv}(\mathbf{V}) = \hat{\mathbf{F}}_T$.
\end{proof}

\subsection{Sum-of-squares Convex Relaxation} \label{sec:sos_convex}
Enforcing strong convexity for a degree-$2$ homogeneous polynomial $H(\mathbf{F};A) =
\mathbf{F}^TA\mathbf{F}$ has a straightforward set up
as solving a semi-definite programming problem with constraint of $A\succeq \epsilon I$. 
Meanwhile, for a polynomial of degree greater
than 2 whose hessian matrix $\nabla^2H(\mathbf{F};a)$ is a function of both $\mathbf{F}$ and $a$, certification of positive semi-definiteness is NP-hard. However, recent progress \cite{parrilo2000structured,magnani2005tractable} in sum-of-squares programming has given powerful semi-definite relaxations of global positiveness certification of polynomials.    
Specifically, let $\mathbf{z}$ be an arbitrary non-zero vector in $\mathbb{R}^3$ and $y(\mathbf{F},\mathbf{z}) = [z_1F_x,z_1F_y, z_1F_z, z_2F_x, z_2F_y, z_2F_z, z_3F_x, z_3F_y, z_3F_z]^T$. If
there exists a positive-definite matrix $Q$ such that   
\begin{align}
\mathbf{z}^T\nabla^2 H(\mathbf{F};a) \mathbf{z}  &= y(\mathbf{F},\mathbf{z})^T Q y(\mathbf{F},\mathbf{z}) > 0, \label{eq:sos-matrix}
\end{align}
then $\nabla^2H(\mathbf{F};a)$ is positive definite
for all non-zero $\mathbf{F}$ under parameter $a$ and $H(\mathbf{F};a)$ is called as sos-convex. Further, equation (\ref{eq:sos-matrix}) can be
written as a set of $K$ sparse linear constraints on $Q$ and $a$.
\begin{align} 
\Tr(A_{k} Q) &= b_k^Ta , \quad k\in \{1 \dots K\} \nonumber \\
Q &\succeq \epsilon I,
\end{align}
where $A_k$ and $b_k$ are constant sparse element indicator matrix and
vector that only depend on the polynomial degree $d$. The number of
constraints $K$ equals 27 for $d = 4$. 

\subsection{Identification} \label{sec:calibration}
This section sets up an efficient convex optimization
for identifying the coefficient $a$ of the polynomial $H(\mathbf{F};a)$ given a set of
measured noisy generalized force-motion $\{\mathbf{F}_{i\in \{1 \dots N\}},
\mathbf{V}_{i\in \{1 \dots N\}}\}$ pairs. In our experiments, we use homogeneous 4th order polynomial. 
The optimization should find the coefficient $a$ such that the
measured forces $\mathbf{F}_i$ are close to the 1-level set surface and the
corresponding gradients are aligned well (up to scale) w.r.t measured velocities
$\mathbf{V}_i$. 
Let $\alpha_i = ||\nabla H(\mathbf{F}_i;a) - (\nabla H(\mathbf{F}_i;a)\cdot V_i) V_i||_2^2$ be 
the L2-projection residual of $\nabla H(\mathbf{F}_i;a)$ onto the measured unit velocity vector $V_i$, and let 
$\beta_i =  (H(\mathbf{F}_i;a) - 1)^2$ be a distance measurement of $\mathbf{F}_i$ from the 1-level set of $H(\mathbf{F}_i;a)$.  
We set up the optimization as follows: 
\begin{align}
  & \underset{a,Q}{\text{minimize}}
  & & \|a\|_2^2 + \sum_{i=1}^{N}(\eta_1 \alpha_i + \eta_2 \beta_i) \label{eq:opt_goal}\\
  & \text{subject to}
  & & \Tr(A_kQ) = b_k^Ta,  \; k = 1,\ldots,K, \label{eq:opt_lrcon}\\
  &&& Q \succeq \epsilon I. \label{eq:opt_sdpcon}
\end{align}
The first term is for parameter regularization.
$\eta_1$ and $\eta_2$ are trade-off parameters determined by cross-validation.
Equations (\ref{eq:opt_lrcon}) and (\ref{eq:opt_sdpcon}) enforce
convexity. Note that the objective is quadratic in $a$ with sparse
linear constraints and a semi-definite constraint on $Q$.\footnote{Code link: \scriptsize{\url{https://github.com/robinzhoucmu/MLab_EXP/blob/master/SlidingExpCode/LimitSurfaceFit/Fit4thOrderPolyCVX.m}}}
We would like to point out that the formulation can be adapted online using projected gradient descent so that the importance of historical data is diminishing as the object moves, enabling the estimation to adapt to changing surface conditions. Evaluating such online version of the identification algorithm is deferred to future work.  

\section{Experiments}
 We conduct simulation and robotic experiments to
demonstrate the accuracy and statistical-efficiency of our proposed representation. 
The model converges to a good solution with few
available data which saves experimental time and design efforts.
We compare the following four different force-motion model representations $\mathcal{H}$:
1) degree-4 convex homogeneous polynomial (poly4-cvx); 
2) degree-4 homogeneous polynomial (poly4) with convexity constraints 
3) convex quadratic (quad) as degree-2 polynomial, i.e.,  $H(\mathbf{F}) =
  \mathbf{F}^TA \mathbf{F}$ with ellipsoid sublevel set;
and 4) gaussian process (GP) with squared exponential 
  kernel\footnote{The squared exponential kernel gives better 
    performance over linear and
    polynomial. Normalizing the input load to a unit vector improves
    performance by requiring the GP to ignore scale. Every
    ($\mathbf{F},\mathbf{V}$) input pair is augmented with ($-\mathbf{F}, -\mathbf{V}$) for training.}.
Denote by $\mathbf{V}_i$ the ground truth instantaneous generalized velocity direction and
$\mathbf{V}_p(\mathbf{F}_i;\mathcal{H})$ as the predicted generalized velocity
direction based on $\mathcal{H}$ for the input generalized
load $\mathbf{F}_i$, we use the average angle $\delta(\mathcal{H}) = \frac{1}{N}
\sum_{i=1}^{N} \arccos (\mathbf{V}_p(F_i;\mathcal{H}) \cdot \mathbf{V}_i) $ between $\mathbf{V}_p(F_i;\mathcal{H})$ and
$\mathbf{V}_i$ as an evaluation criterion.

\subsection{Simulation Study}
Two kinds of pressure distribution are studied. 
\begin{itemize}
\item ``Legged'' support: Randomly sampled three support points on a unit
circle with randomly assigned pressure. 
\item ``Uniform'' support: Uniformly distributed 360 support points on
  a unit circle and 400 support points within a unit square. Each point has the
  same support pressure.
\end{itemize}
For each pressure configuration, we conduct 50 experimental
trials. 
To generate the simulated force-motion data, we assume a Coulomb friction model at each support point with a uniform coefficient of friction.
Without loss of generality, sum of pressure over all contact points is normalized to one and the origin is set as the center of pressure. 
 For each trial of ``uniform'' support, we sampled 150 instantaneous generalized
velocities directions $\mathbf{V}_i$ uniformly
on the unit sphere and compute the corresponding generalized friction
loads $\mathbf{F}_i$. 
For each trial of ``legged'' support, 75 $(\mathbf{F}_i,\mathbf{V}_i)$ pairs are uniformly sampled on the facets (same $\mathbf{V}_i$ but different $\mathbf{F}_i$ for each facet) and another 75 pairs are uniformly sampled in the same fashion as ``uniform'' support. In doing so, the dataset has a diverse coverage.
Among the 150 pairs, 50\% is used for
hold-out testing, 20\% is used for cross validation and four
different amounts (7, 15, 22, 45) from the rest of 30\% are used as training.
In order to evaluate the algorithms' robustness under noise, we additionally
corrupt the training and validation set using Gaussian noise of standard deviation $\sigma = 0.1$ to each dimension
of both $\mathbf{F}_i$ and $\mathbf{V}_i$ (renormalized to unit vector).
From Fig. \ref{fig:simbar} we can reach the following conclusions. 
1) Poly4-cvx has the smallest $\delta(\mathcal{H})$ for different amounts of training data and pressure configurations.
2) Both poly4-cvx and convex quadratic show superior performance when data is scarce and noisy, demonstrating convexity is key to data-efficiency and robustness. 
Poly4-cvx model additionally shows larger improvement as more data is available due to stronger model expressiveness. 
3) Poly4 (without convexity constraint) performs the worst when only few data is available, but gradually improves as more data
is available for shaping the surface.\footnote{For noise-free experiments shown in Fig. \ref{fig:simbar_2} and \ref{fig:simbar_4}, 
when enough training data (more than 22) is presented, poly4 performs slightly better than poly4-convex. 
We conjecture such difference is due to the gap between sos-convex polynomials and convex polynomials.} GP has similar performance trends as poly4 but worse on average.
4) Polynomial models enjoy significant performance advantages when limit surface is smoother as in uniform point support (approximation of uniform patch contact). 
Such advantage is smaller for three-points support whose limit surface has large flat facets.     

\begin{figure*}[t!]
\centering
\begin{subfigure}[t]{\figbarlen}
\centering
\includegraphics[width=\figbarlen]{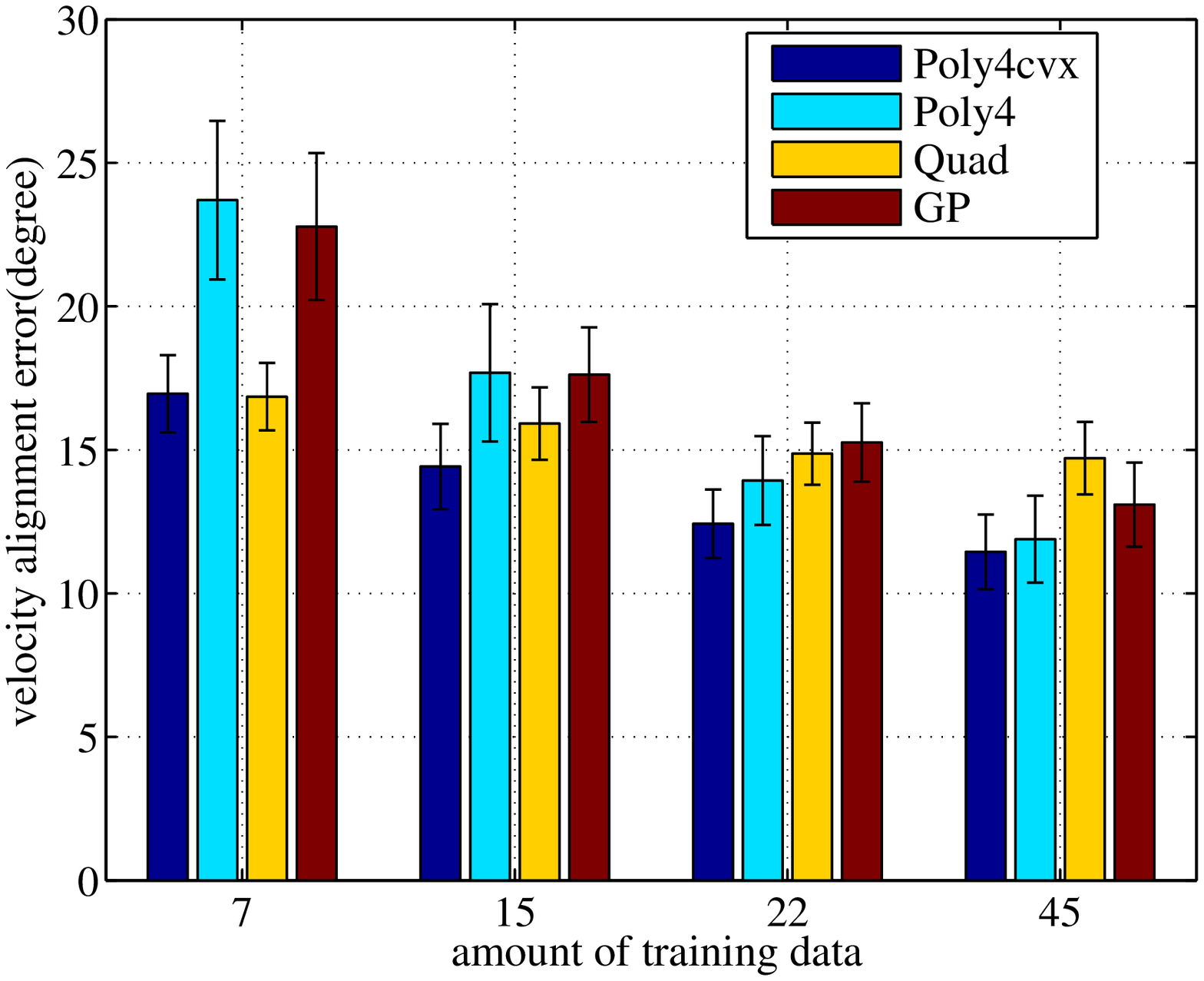} 
\caption{Three support points with noisy training and validation data.\\}
\label{fig:simbar_1}
\vspace{-0.1in}
\end{subfigure}
~
\begin{subfigure}[t]{\figbarlen}
\centering
\includegraphics[width=\figbarlen]{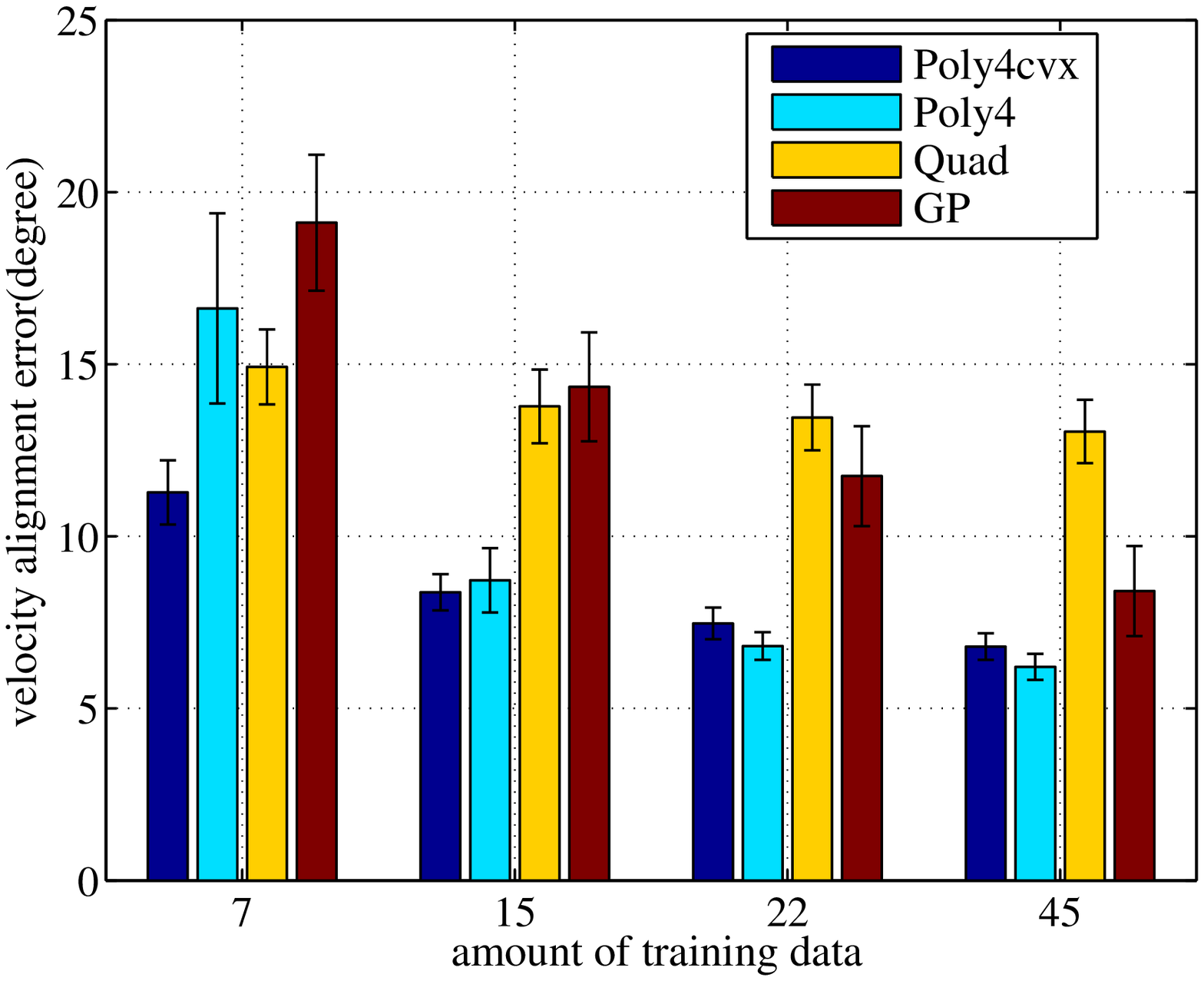}
\caption{Three support points with noise-free training and validation data.\\}
\label{fig:simbar_2}
\vspace{-0.1in}
\end{subfigure}
~
\begin{subfigure}[t]{\figbarlen}
\centering
\includegraphics[width=\figbarlen]{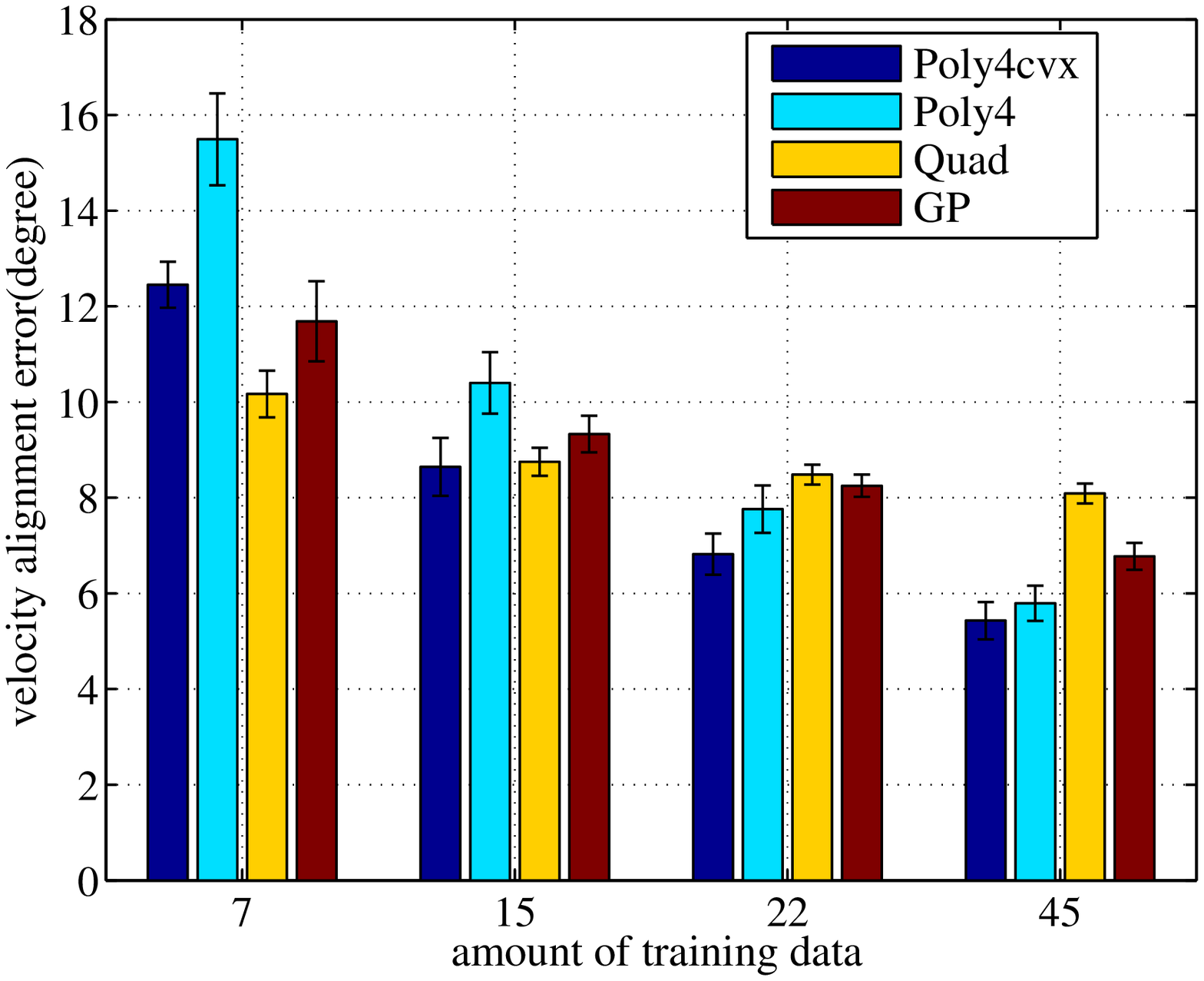} 
\caption{Uniform circular support points with noisy training and validation data.\\}
\label{fig:simbar_3}
\vspace{-0.1in}

\end{subfigure}
~
\begin{subfigure}[t]{\figbarlen}
\centering
\includegraphics[width=\figbarlen]{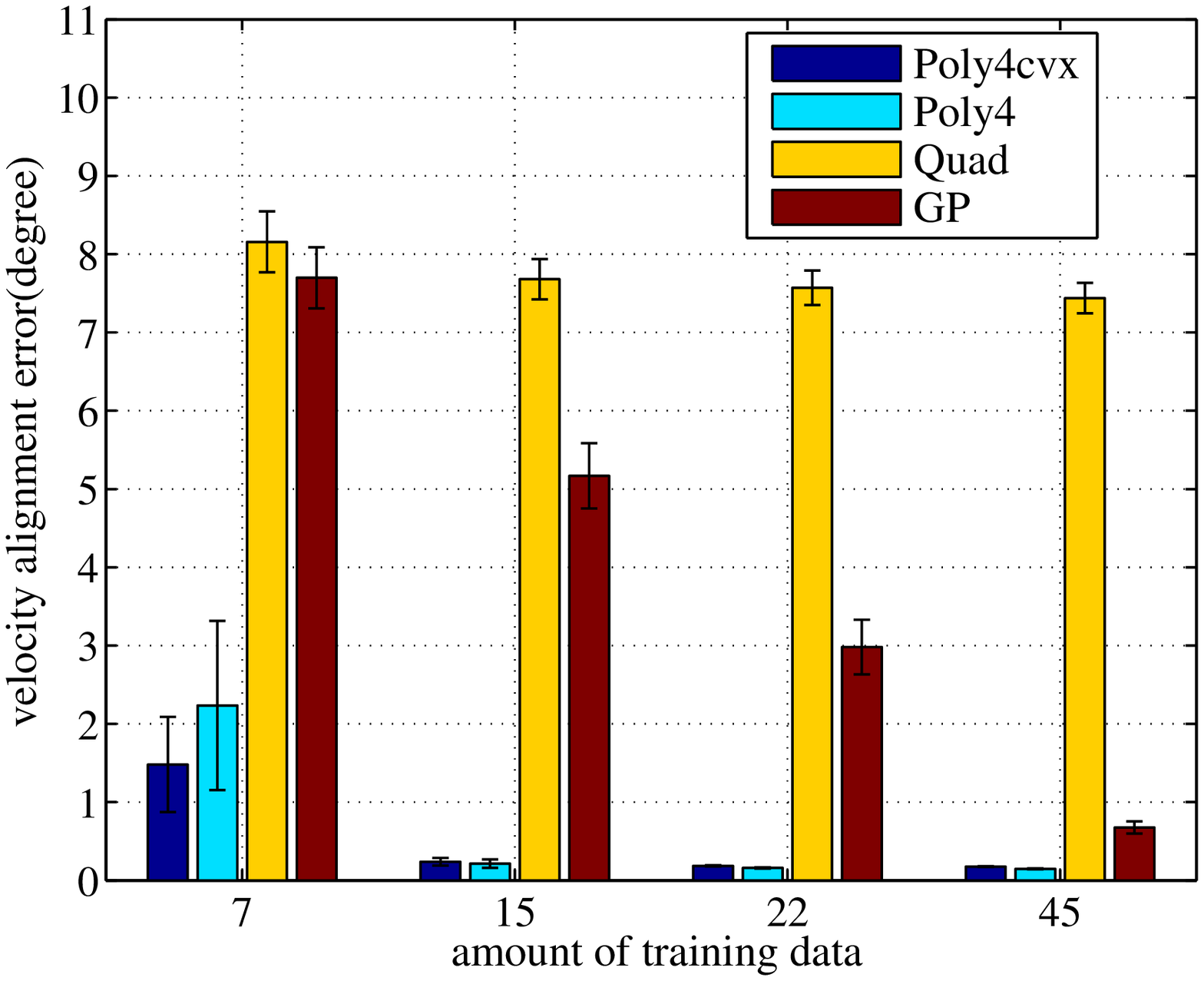}
\caption{Uniform circular support points with noise-free training and validation data.}
\label{fig:simbar_4}
\vspace{-0.1in}
\end{subfigure}
\normalsize
\caption{\small Test error comparison for simulation experiments with 95\% confidence bar
  (50 random evaluations) among different methods as amount of
  training data increases for three random support points and 360 support points on a
  ring respectively. Results for uniform pressure distribution within a square are similar to
  uniform circular support and omitted for space.}
\label{fig:simbar}
\vspace{-0.1in}
\end{figure*}

\subsection{Robotic Experiment}
We mount three screws at four different sets of locations underneath an alluminium right-angle triangular work
object\footnote{The triangular object weighs 1.508kg with edge lengths of 150mm, 150mm and 212.1mm. 
The four different set of support point locations (in mm) with respect
to the right angle corner vertex are: [(10,10), (10,130), (130,10)],
[(30,30), (30,90), (90,30)], [(10,10), (10,130), (90,30)], [(30,30), (63.33,43.33), (43.33,63.33)].}. 
Given known mass and COM projection, ideal ground truth pressure for
each support point can be computed by solving three linear
equations assuming each screw head approximates a point
contact. Fig. \ref{fig:robot_lc_tri} shows a flipped view of one
arrangement whose ideal LS is illustrated in
Fig. \ref{fig:robot_lc_ideal}, constructed by Minkowski addition of generalized friction at each
single point support assuming Coulomb friction model with uniform coefficient of friction. 
Three pairs of symmetric facets\footnote{The
third one is in the back not visible from presented view.} 
characterize indeterminate friction force when
rotating about one of the three support points.
Comparison among identified fourth-order homogeneous polynomials with and without convexity constraint is shown in Fig. \ref{fig:robot_lc_poly4_5} and \ref{fig:robot_lc_poly4cvx_5}. 
We can see that convex-shape constraint is essential to avoid poor generalization error when little data is available.
Fig. \ref{fig:robot_lc_qp_10} and \ref{fig:robot_lc_poly4cvx_10}
compare the level sets of a convex quadratic (ellipsoid) and
a sos-convex degree-4 homogeneous polynomial, demonstrating that
the higher degree polynomial captures the facets effect better than quadratic models.
\begin{figure}[!h]
\centering
\begin{subfigure}[t]{1.5in}
\centering
\includegraphics[width=1.5in]{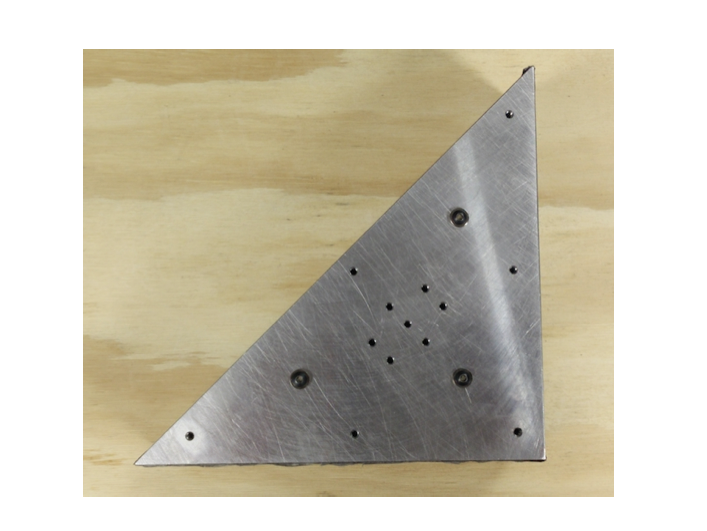} 
\caption{Triangular block with three support screws.}
\label{fig:robot_lc_tri}
\end{subfigure}
~
\begin{subfigure}[t]{1.7in}
\centering
\includegraphics[width=1.7in]{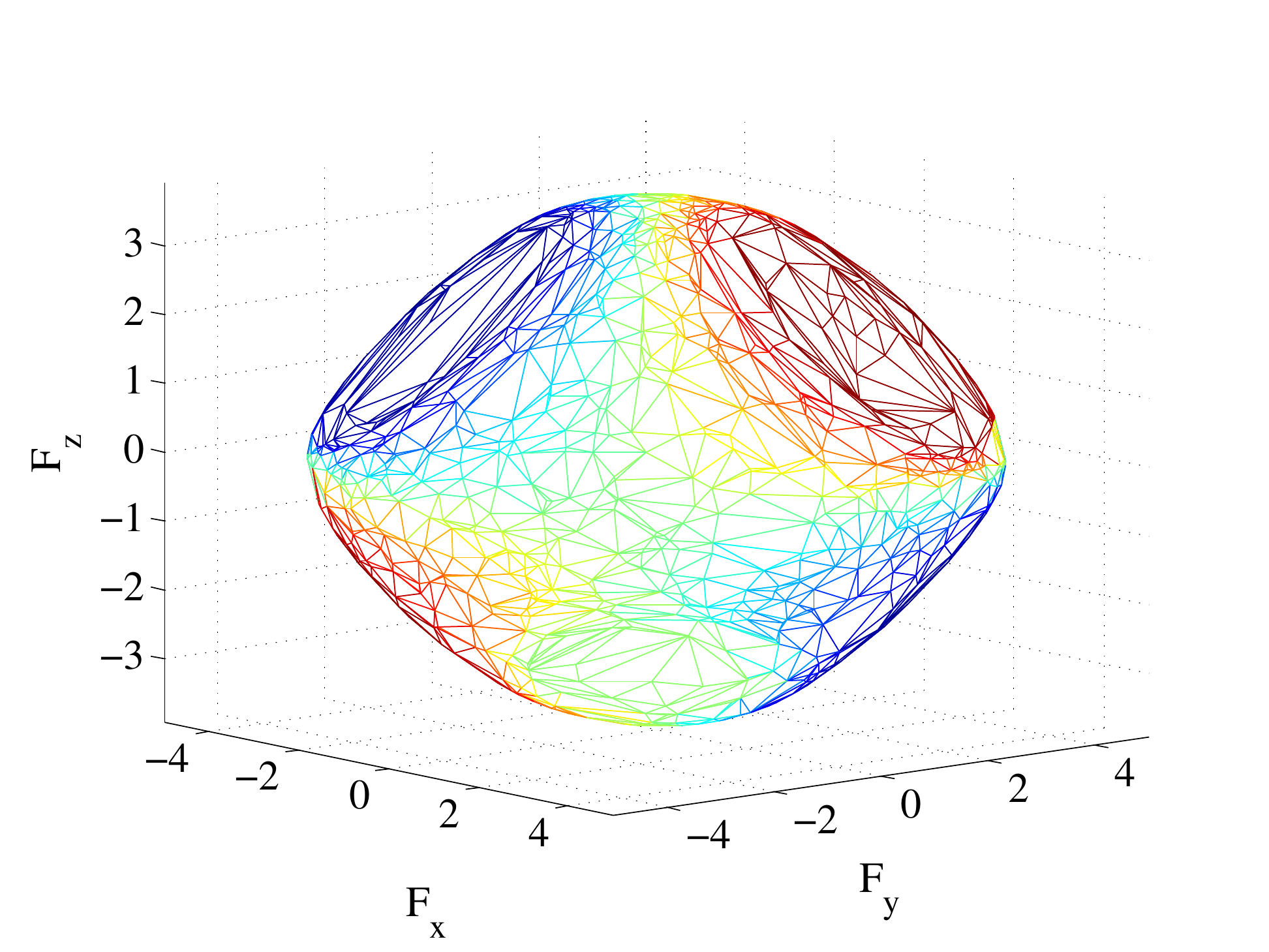}
\caption{Ideal limit surface with facets.}
\label{fig:robot_lc_ideal}
\end{subfigure}
~
\begin{subfigure}[t]{\mylength}
\centering
\includegraphics[width=\mylength]{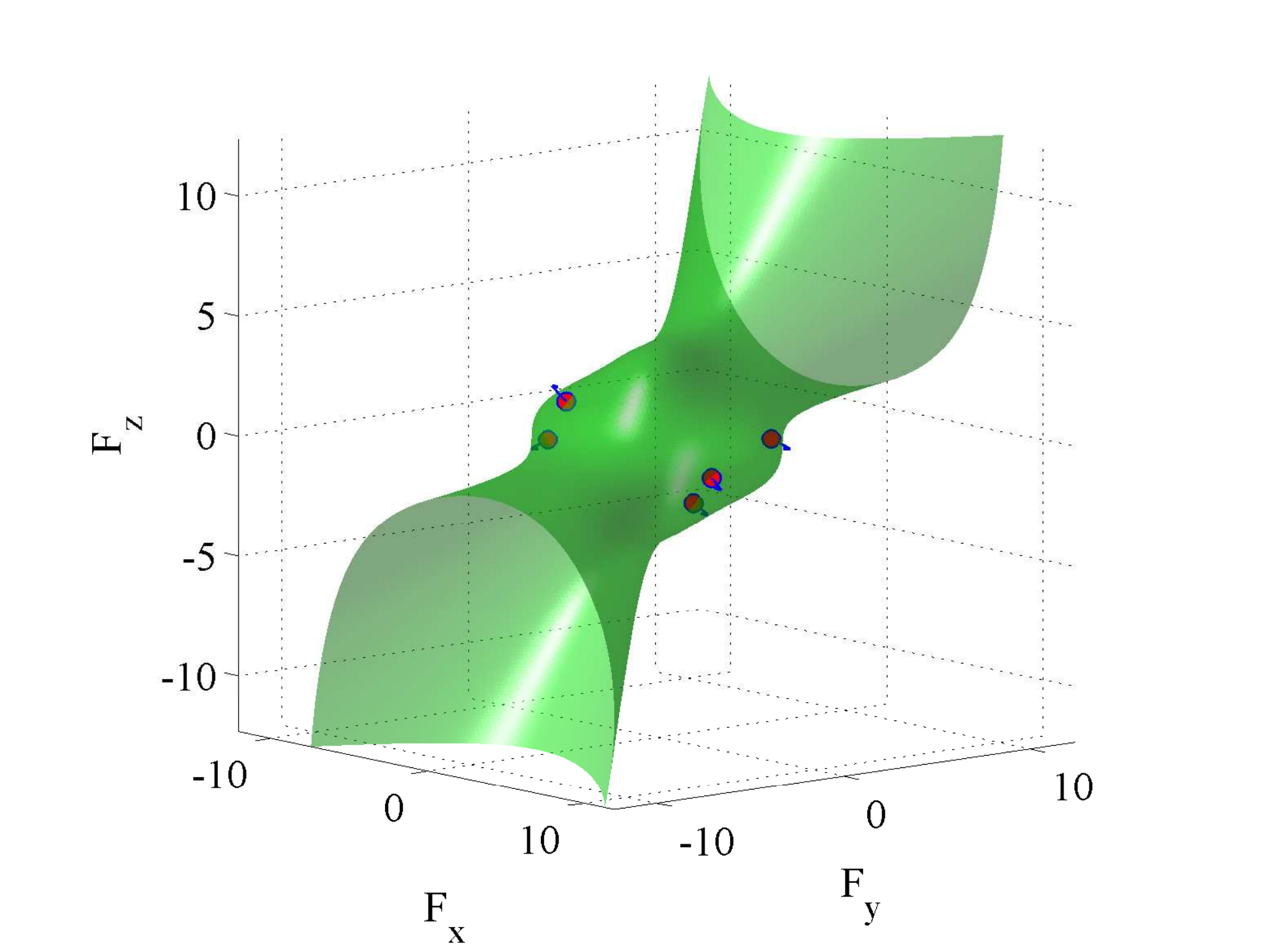}
\caption{Poly4 fit with 5 training and 5
  validation data. }
\label{fig:robot_lc_poly4_5}
\end{subfigure}
~
\begin{subfigure}[t]{\mylength}
\centering
\includegraphics[width=\mylength]{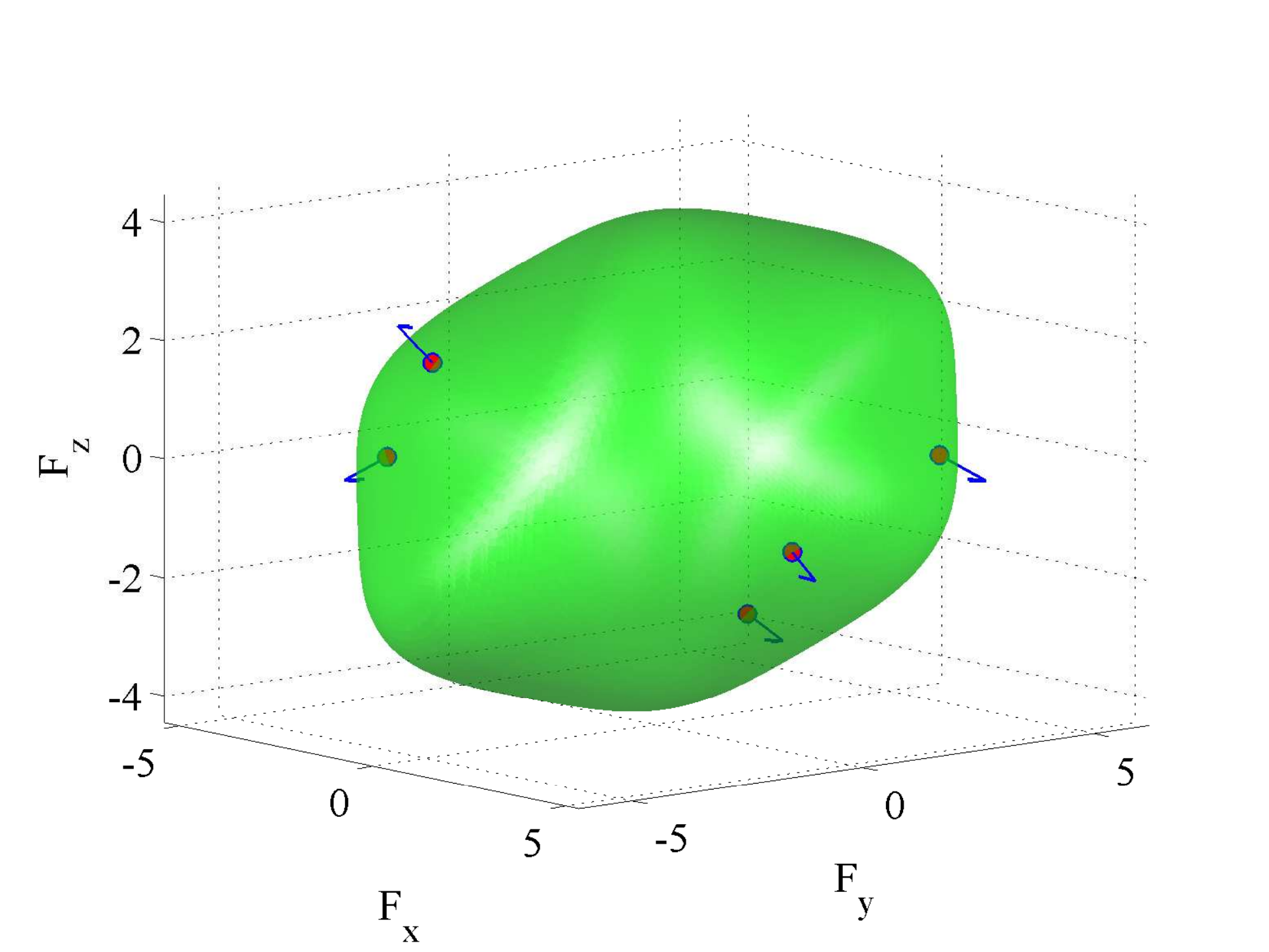} 
\caption{Poly4-cvx fit with 5 training and 5
  validation data.}
\label{fig:robot_lc_poly4cvx_5}
\end{subfigure}
~
\begin{subfigure}[t]{\mylength}
\centering
\includegraphics[width=\mylength]{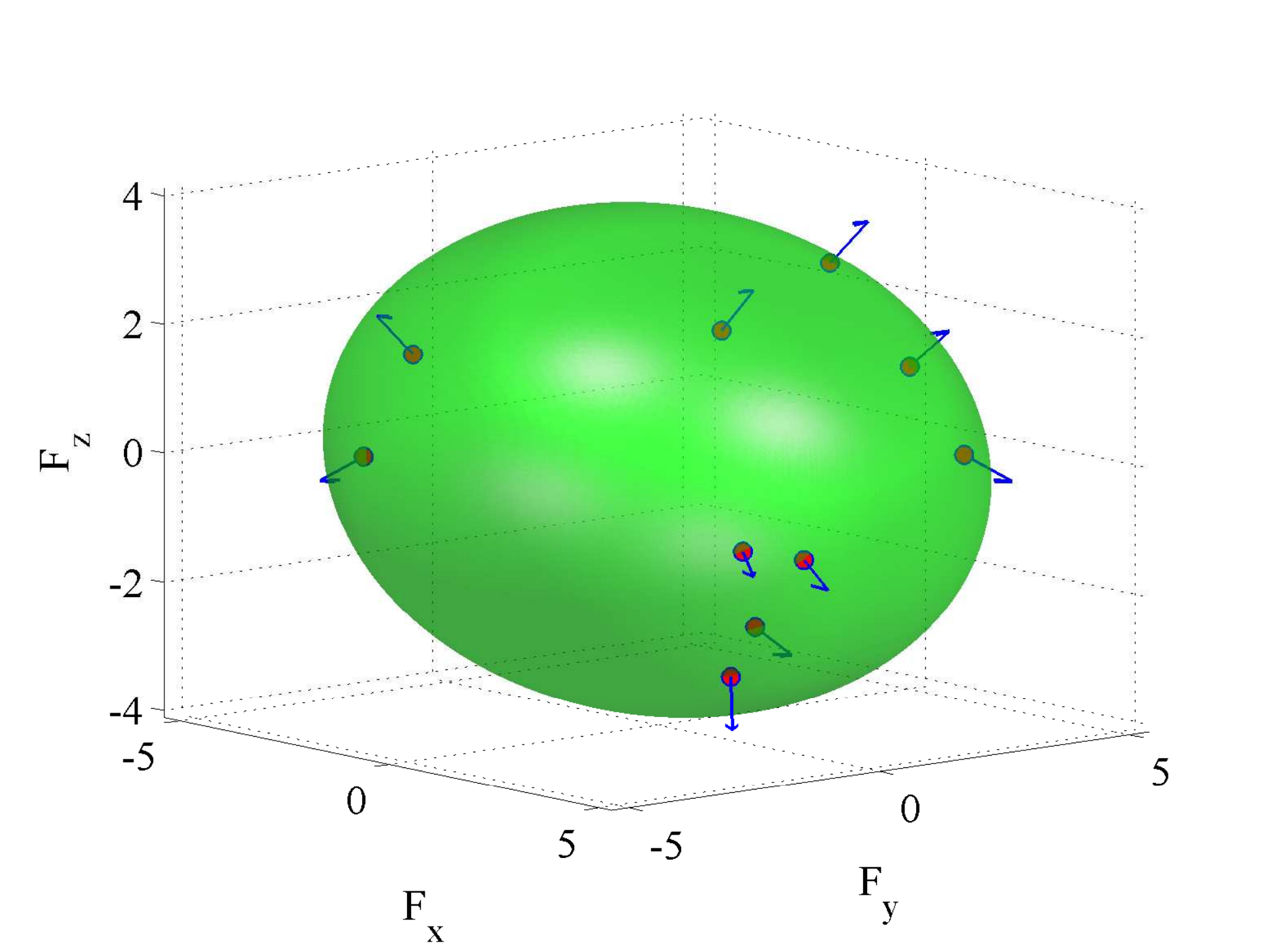} 
\caption{Convex quadratic fit with 10 training and 10
  validation data.\\}
\label{fig:robot_lc_qp_10}
\vspace{-0.1in}
\end{subfigure}
~
\begin{subfigure}[t]{\mylength}
\centering
\includegraphics[width=\mylength]{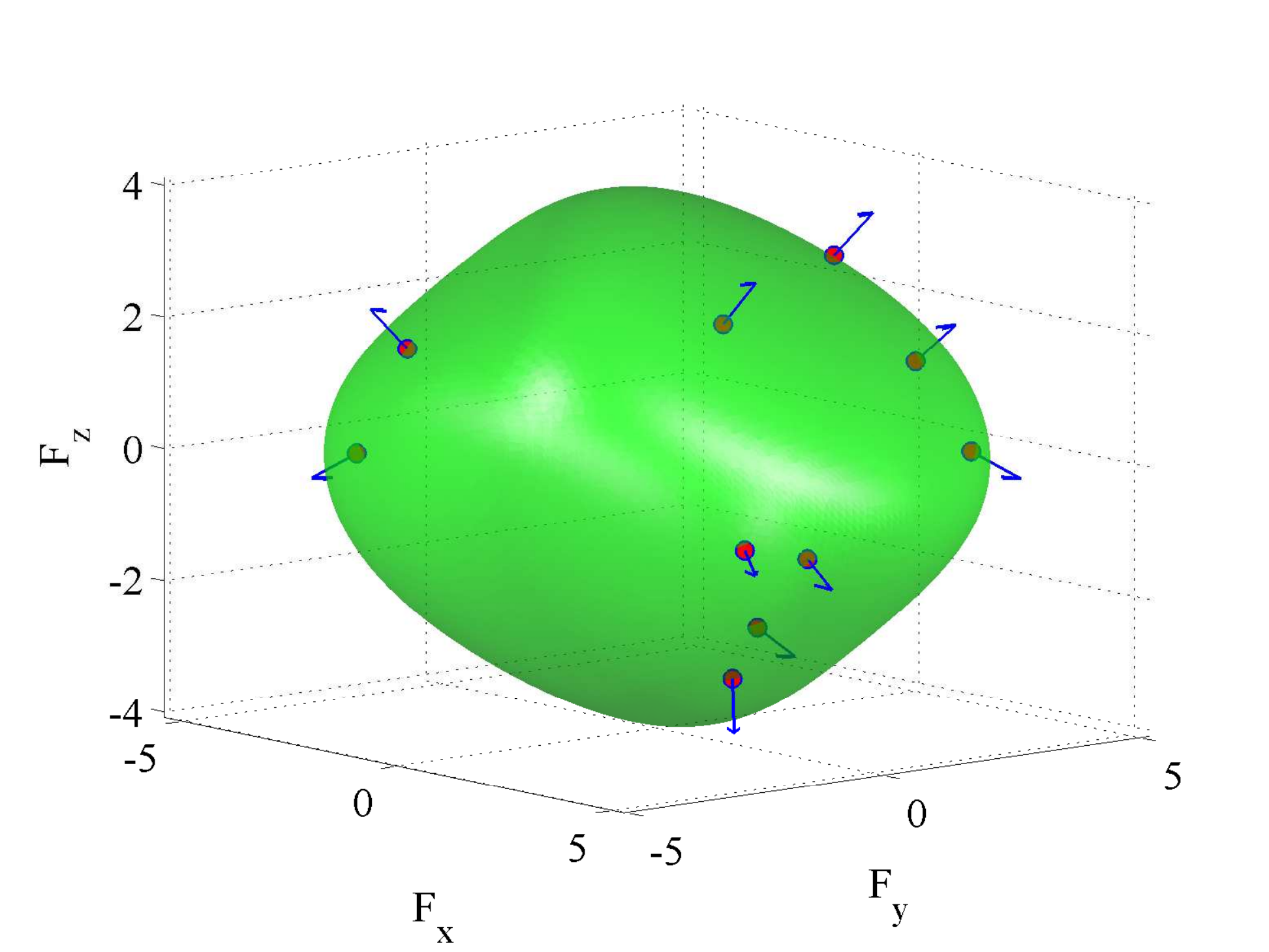} 
\caption{Poly4-cvx fit with 10 training and 10
  validation data.}
\vspace{-0.1in}
\label{fig:robot_lc_poly4cvx_10}
\end{subfigure}

\caption{\small Level set friction representations for the pressure arrangement in Fig. \ref{fig:robot_lc_tri}.   
Red dots and blue arrows are collected generalized forces and
velocities from force-torque and motion capture sensor
respectively. Fig. \ref{fig:robot_lc_poly4_5} and 
\ref{fig:robot_lc_poly4cvx_5}, Fig. \ref{fig:robot_lc_qp_10} and
\ref{fig:robot_lc_poly4cvx_10} share the same data.}
\label{fig:robot_lc}
\vspace{-0.15in}
\end{figure}

We conduct robotic poking (single point pushing) experiments on wood and paper board surfaces. 
In each experiment, we generate 50 pokes (30 for training set, 10 for validation set and 10 for test set) with randomly chosen contact points and
pushing velocity directions.\footnote{During each pushing action, the robot moves at a
slow speed of 2.5mm/s with a total small push-in distance of 15mm. 
Each generalized velocity direction is approximated as the direction of pose
displacement and generalized force is averaged over the action duration.} 
Fig. \ref{fig:robot_exp_compare} shows model accuracy (averaged over four different pressure arrangments) with
respect to increase in amount of training data for
different methods evaluated on both the hold-out test sensor data and
samples from ideal LS. We can see similar performance trends as
in simulation experiments. Note that both evaluations only serve as 
certain reference criteria. Sensor data is noisy and all possible force measurements
from a single point pusher only cover a
limited space of the set of friction loads. We also do not
intend to treat the idealized limit surface as absolute ground truth
as there is no guarantee on uniform coefficient of friction between
the support points and the underlying surface. Additionally, point
contact and isotropic Coulomb friction model are only approximations
of reality. Nevertheless, both evaluations demonstrate performance advantage
of our proposed poly4-cvx model.
\small
\begin{figure*}[t!]
\centering
\begin{subfigure}[t]{\mylen}
\centering
\includegraphics[width=1.6in]{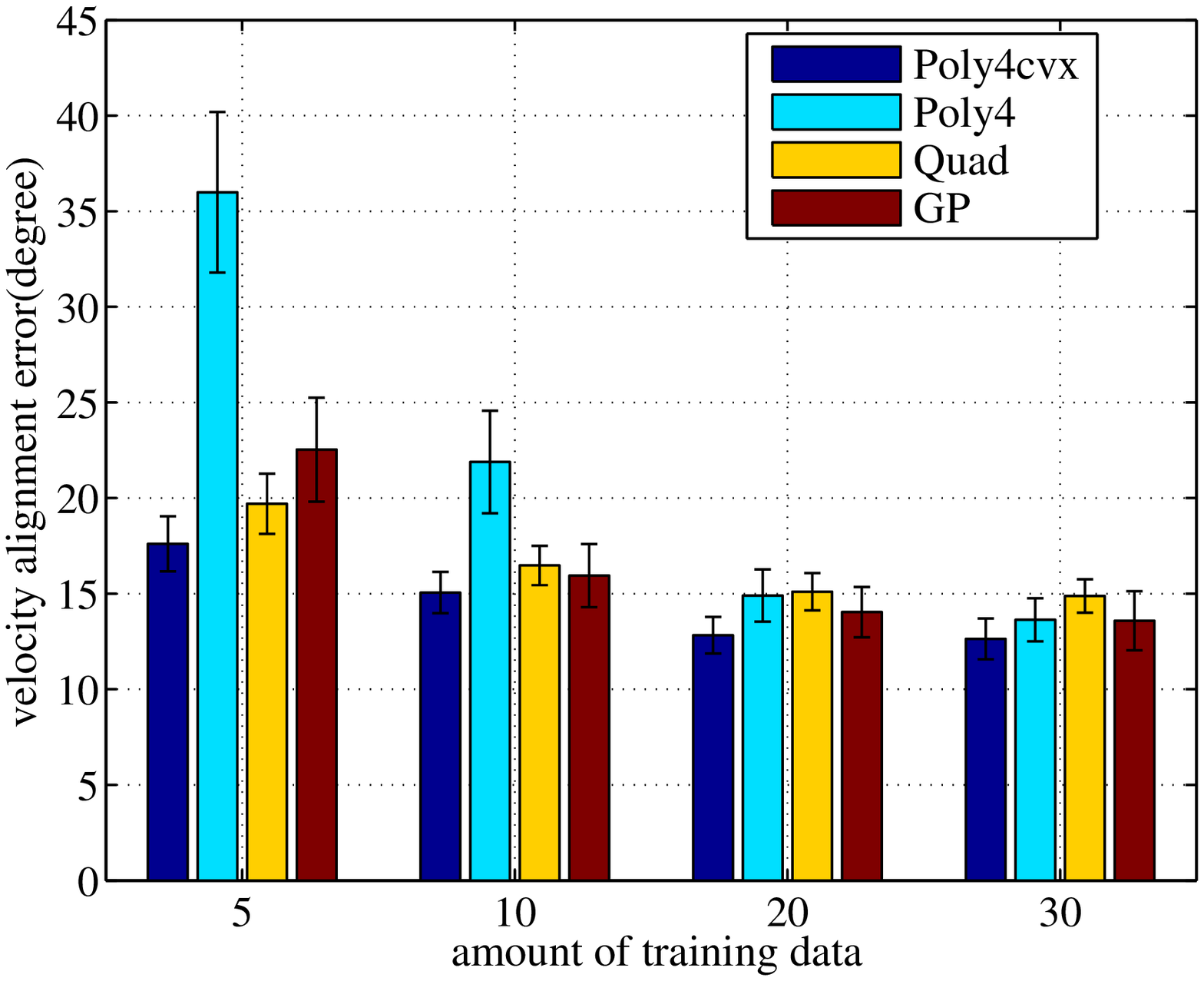} 
\caption{\small Test on sensor data (wood surface). \\}
\vspace{-0.1in}
\label{fig:robot_exp_wood}
\end{subfigure}
~
\begin{subfigure}[t]{\mylen}
\centering
\includegraphics[width=1.6in]{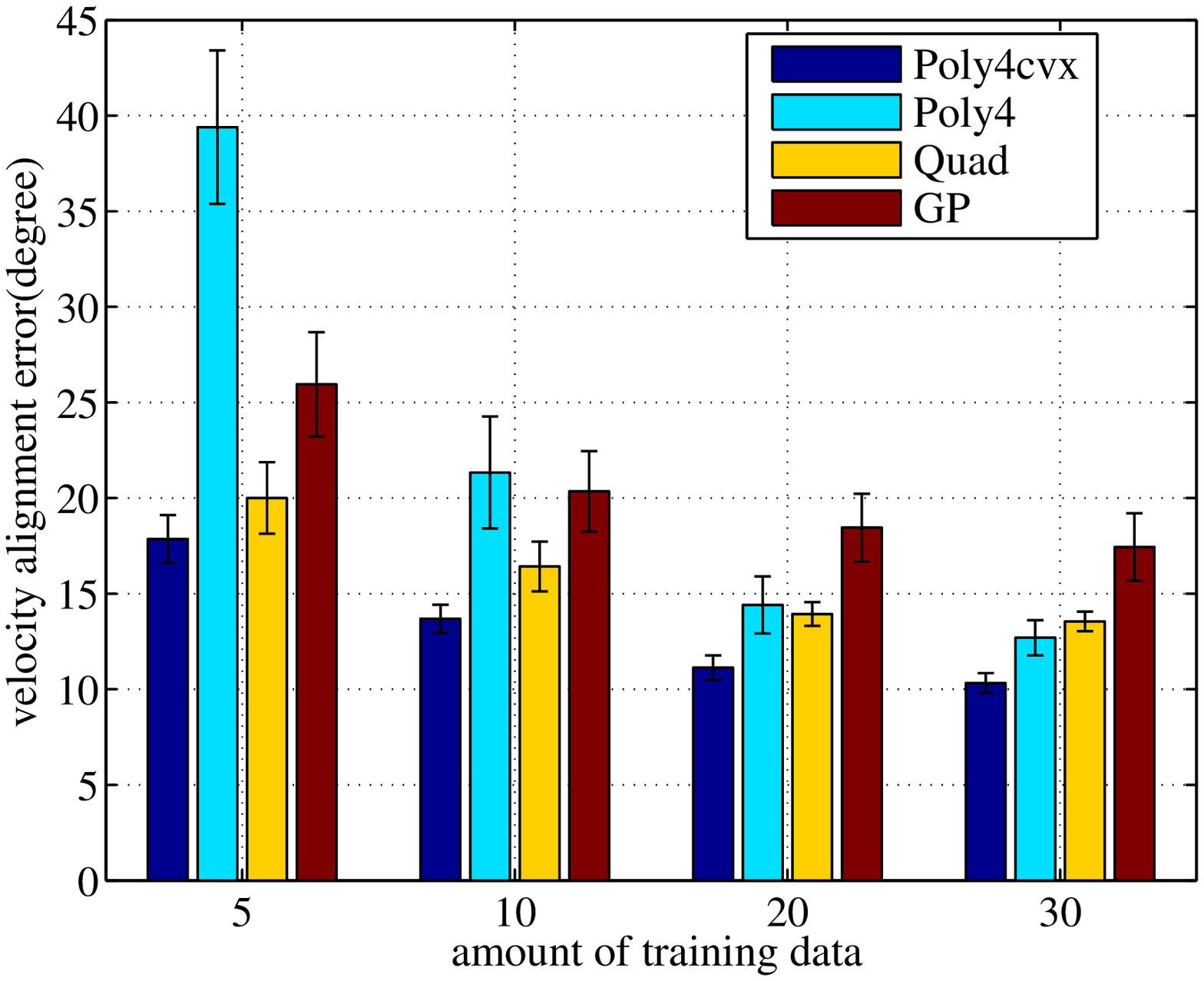}
\caption{\small Test on data sampled from ideal LS (wood surface).\\}
\vspace{-0.1in}
\label{fig:robot_sim_wood}
\end{subfigure} 
~
\begin{subfigure}[t]{\mylen}
\centering
\includegraphics[width=1.6in]{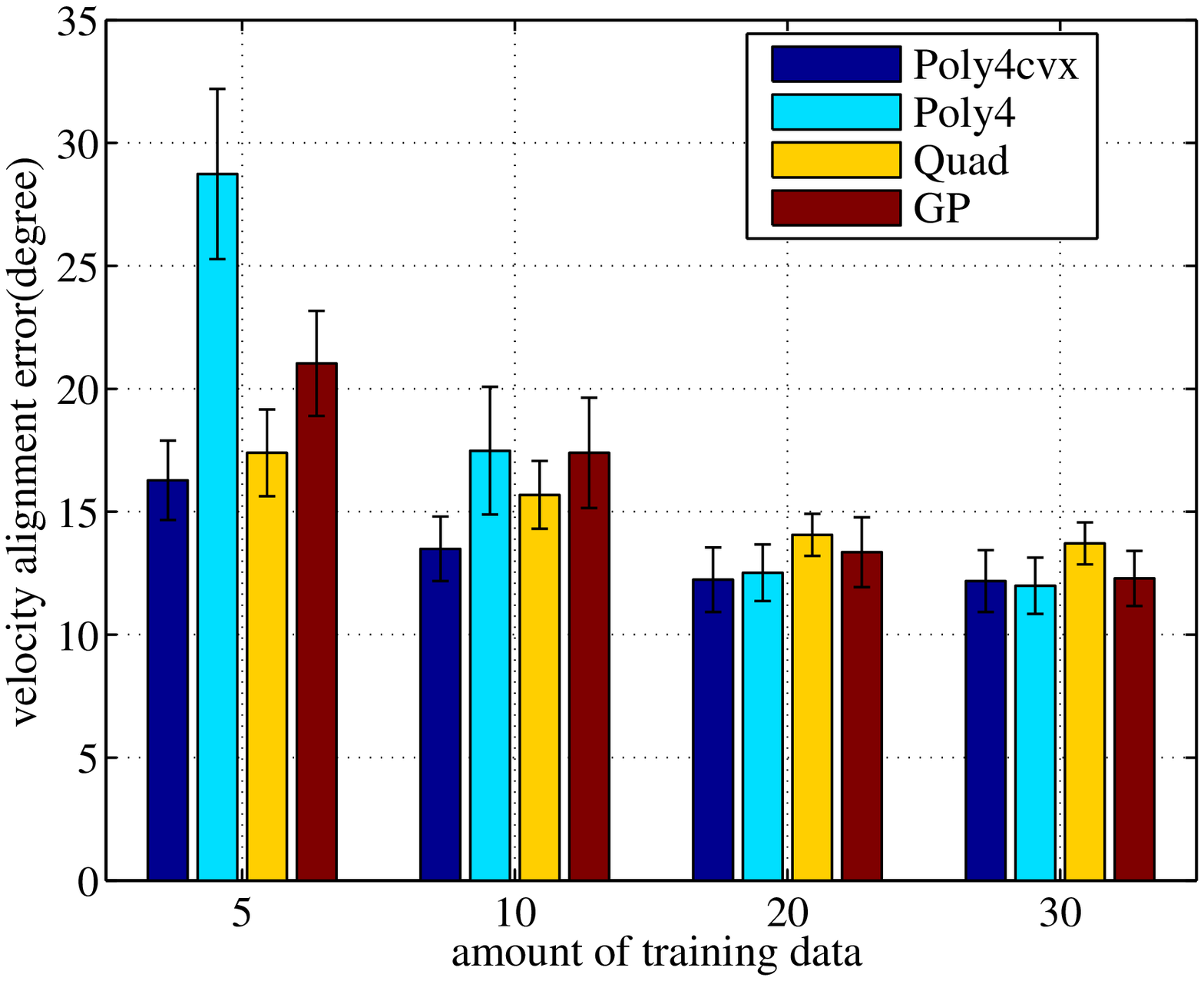}
\caption{\small Test on sensor data (paper board surface). \\}
\vspace{-0.1in}
\label{fig:robot_exp_blk}
\end{subfigure} 
~
\begin{subfigure}[t]{\mylen}
\centering
\includegraphics[width=1.6in]{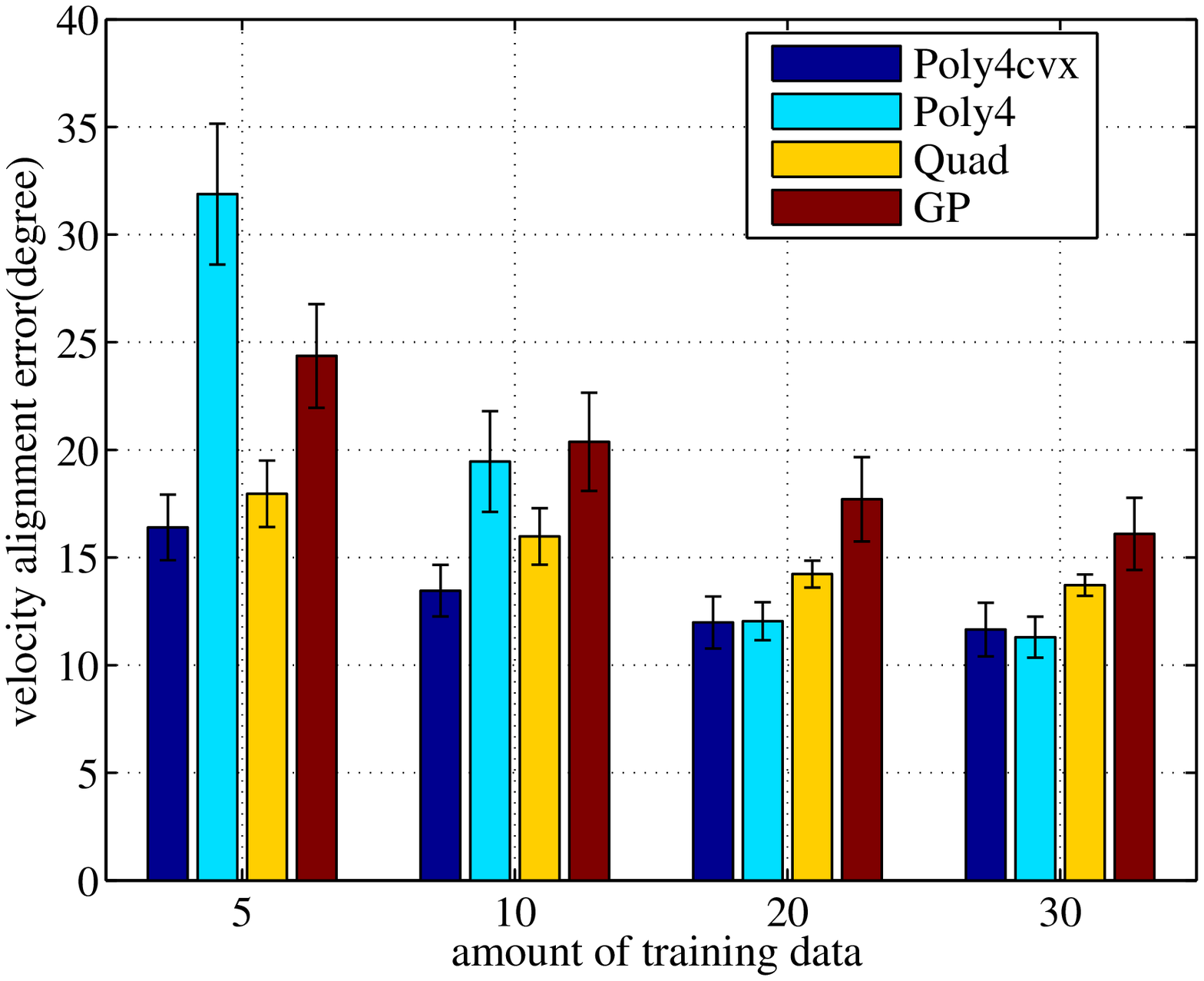}
\caption{\small Test on data sampled from ideal LS (paper board surface).}
\vspace{-0.1in}
\label{fig:robot_sim_blk}
\end{subfigure} 

\caption{\small Test error comparison for robotic experiments with 95\% confidence bar (50 random evaluations) among
  different methods as amount of training data increases for three support
  points on wood and hard paper board surfaces.}
\label{fig:robot_exp_compare}
\vspace{-0.15in}
\end{figure*}
\normalsize
\section{Applications}\label{sec:application} 
\subsection{Stable Push Action Generation}
The resultant object velocity under a single point
push action is hardly fully predictable. However a two-points push action against an edge of the object
can be stable such that the object will remain attached to the pusher without slipping or breaking contact \cite{Lynch1996e}. That is, the slider and pusher will move about the same center of rotation (COR) point $p_{c}$. 
Given the level set representation $H(\mathbf{F})$, the condition of determining whether a two-points push with instantaneous generalized velocity $\mathbf{V}_{p_c}$ is stable or not is equivalent to check if the corresponding generalized friction force
$\mathbf{F}_{p_c} = H_{inv}(\mathbf{V}_{p_c})$ lies in the
applied composite generalized friction cone $\mathbf{F}_c$. To validate predictions based on the model, we sampled 60 random CORs and execute with the robot for three different pressure arrangements on a novel support surface material (hard poster paper).\footnote{We use the same triangular block in Fig. \ref{fig:robot_lc_tri} with two three-points contacts [(10,10), (10,130), (130,10)], [(30,30), (30,90), (90,30)] as well as full patch contact. The 60 CORs are tight rotation centers within a 400mm$\times$400mm square centered at the COM.} 15 out of the 60 CORs are labelled as stable. The training force-motion data are collected from pushing the object on a wood surface. 
Table \ref{table:stable_acc} and \ref{table:stable_recall} summarize the classification accuracy and positive (stable) class recall measurements of three invertible methods 
with respect to increase in amount of training data. Fig.
\ref{fig:stable_region} shows an example (full patch contact) that the stable regions generated from
the identified poly4-cvx model is much larger than the conservative
analysis as in \cite{Lynch1996e} which misses the tight/closer rotation centers.
\begin{table}
\centering
\caption{\small Comparison of average accuracy with 95\% confidence
  interval as amount of training data increases.}
\begin{tabular}{|c|c|c|c|}
\hline
 & 10 & 20 & 30 \\
\hline
poly4-cvx & \textbf{88.13}$\pm$1.80 & \textbf{91.33}$\pm$1.61 & \textbf{93.07}$\pm$1.45 \\
poly4 & 85.27$\pm$2.12 & 89.40$\pm$1.98 & 93.00$\pm$1.62  \\
quadratic & 87.93$\pm$1.72 & 87.20$\pm$1.65 & 88.00$\pm$1.39  \\
\hline
\end{tabular}
\label{table:stable_acc}
\end{table}
\vspace{-5pt}
\begin{table}
\centering
\caption{\small Comparison of average positive recall with 95\%
  confidence interval as amount of training data increases.}
\begin{tabular}{|c|c|c|c|}
\hline
 & 10 & 20 & 30 \\
\hline
poly4-cvx & \textbf{90.13}$\pm$3.54 & \textbf{96.69}$\pm$1.93 & \textbf{98.18}$\pm$1.32 \\
poly4 &79.96$\pm$5.25 & 92.76$\pm$2.90 & 97.18$\pm$1.84 \\
quadratic &73.18$\pm$4.61 &73.38$\pm$4.69 & 73.87$\pm$4.63 \\
\hline
\end{tabular}
\label{table:stable_recall}
\vspace{-0.15in}
\end{table}

\normalsize
\begin{figure}[h!]
\centering
\includegraphics[width=3in]{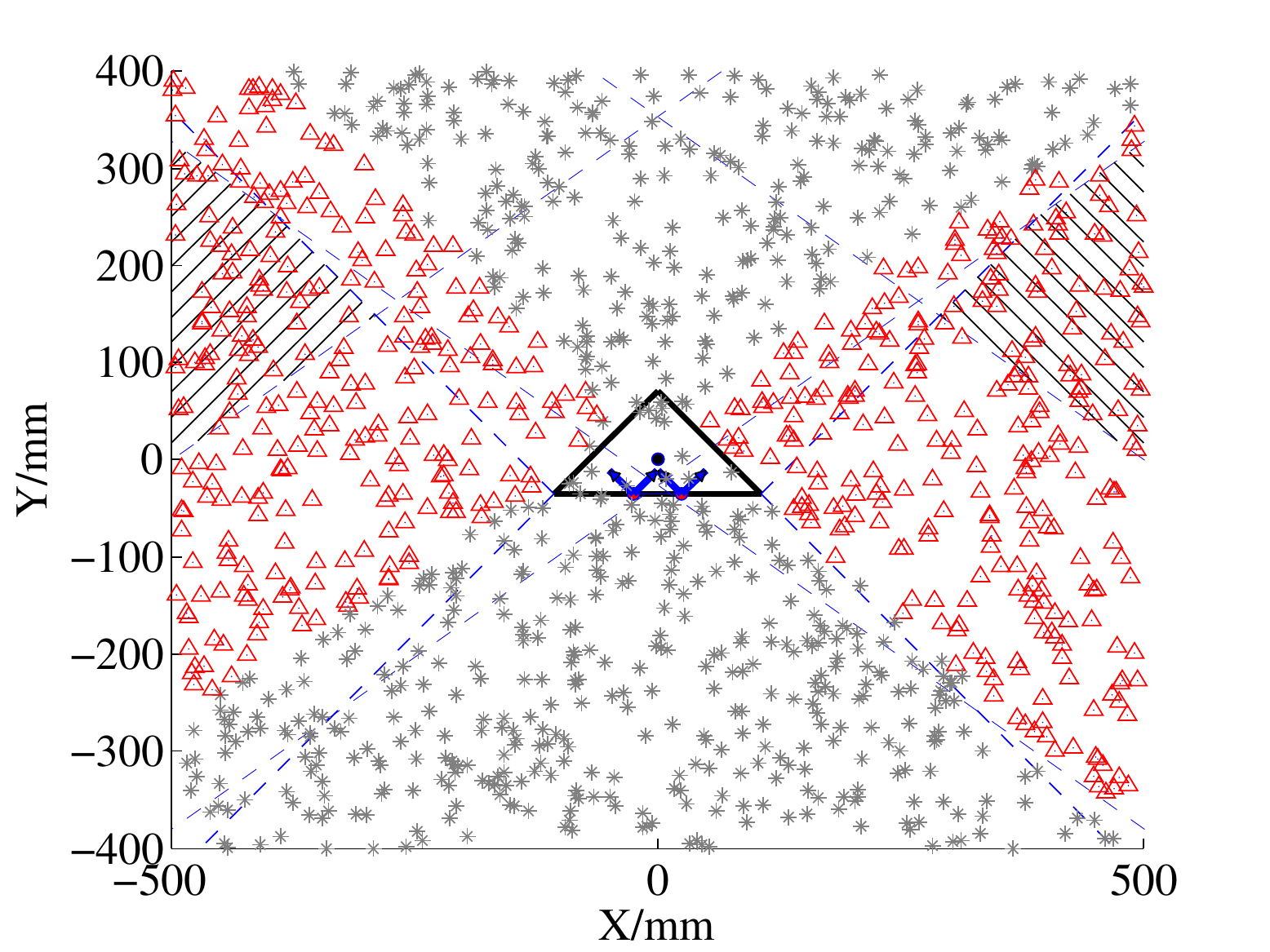}
\vspace{-5pt}
\caption{\small Hatched areas correspond to stable CORs region based
  on the conservative analysis \cite{Lynch1996e}. Red triangles are stable CORs and
gray stars are non-stable CORs based on the poly4-cvx model. The two push points are 50mm
in width. The pusher and objects are covered with electrical tape and guffer tape respectively with measured coefficient of friction equals one.}
\label{fig:stable_region}
\vspace{-0.15in}
\end{figure}

\subsection{Free Sliding Dynamics Simulation}
Given $H(\mathbf{F})$, the equation of motion (with respect to the
object local coordinate frame) during free sliding assuming a uniform surface can be written as $I\frac{dV}{dt} = -H_{inv}(\mathbf{V})$,
where $I$ is the moment of inertia. We use the Runge-Kutta method provided
in MATLAB ODE45 and demonstrate several example sliding trajectories in
Fig. \ref{fig:dyn}. As studied in \cite{Goyal1991a}, given an ideal
limit surface, a free sliding object comes at rest with one of several
definite generalized velocity directions (in local body frame), termed as
eigen-directions. We have empirically found a similar trend that there
exists multiple converging sets of
initial generalized velocities. An example
behavior is shown in Fig. \ref{fig:dynlc} where the final velocity directions
(instantaneous rotation centers) remain in the same
small region regardless of different initial velocities. 

\begin{figure}[h!]
\centering
\begin{subfigure}{\mylen}
\centering
\includegraphics[width=1.6in]{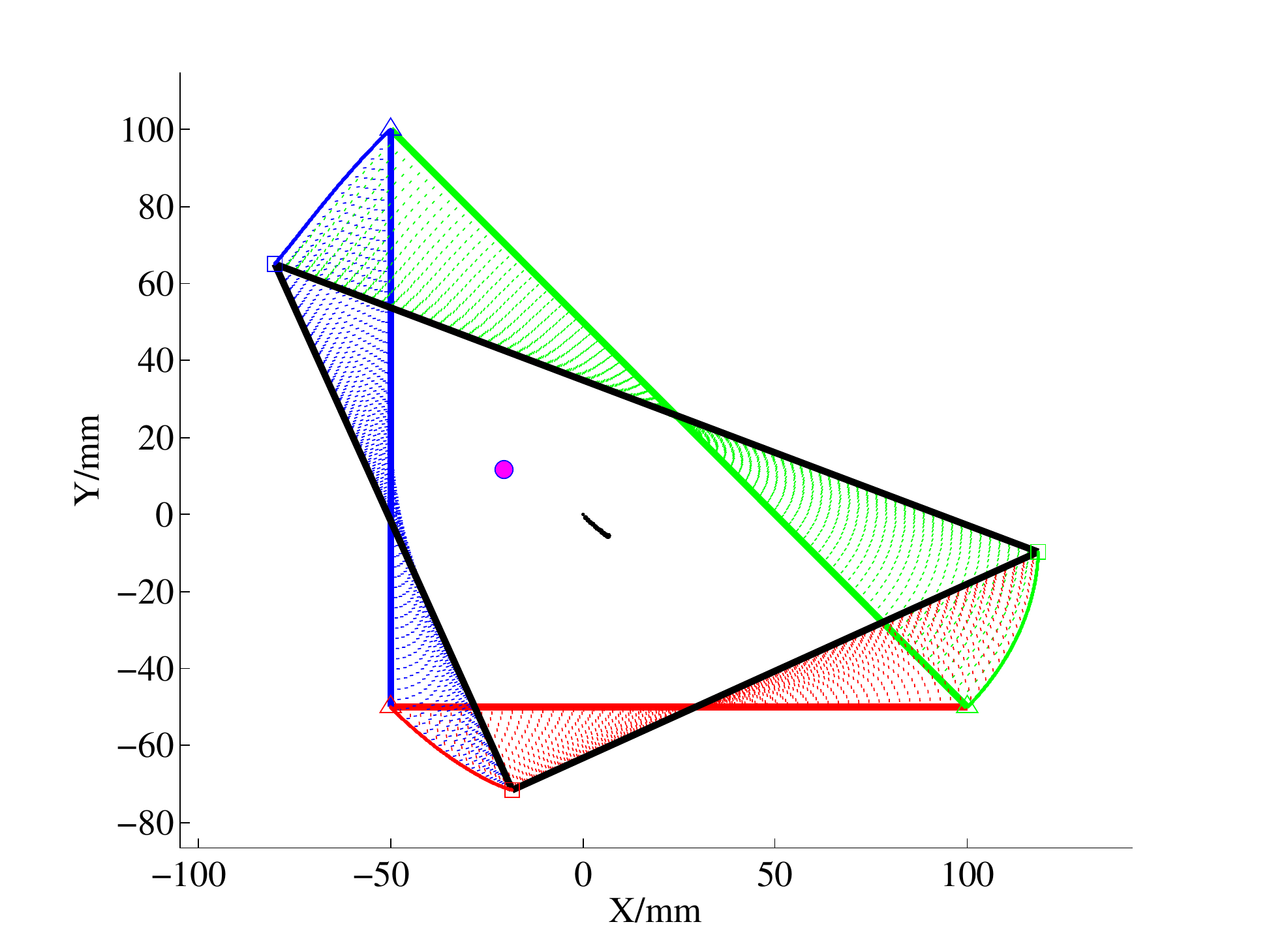} 
\caption{\small Trajectory with $\mathbf{V}(0)$ equals (150mm/s, -150mm/s, 2$\pi$rad/s). }
\label{fig:dyn1}
\end{subfigure}
~
\begin{subfigure}{\mylen}
\centering
\includegraphics[width=1.6in]{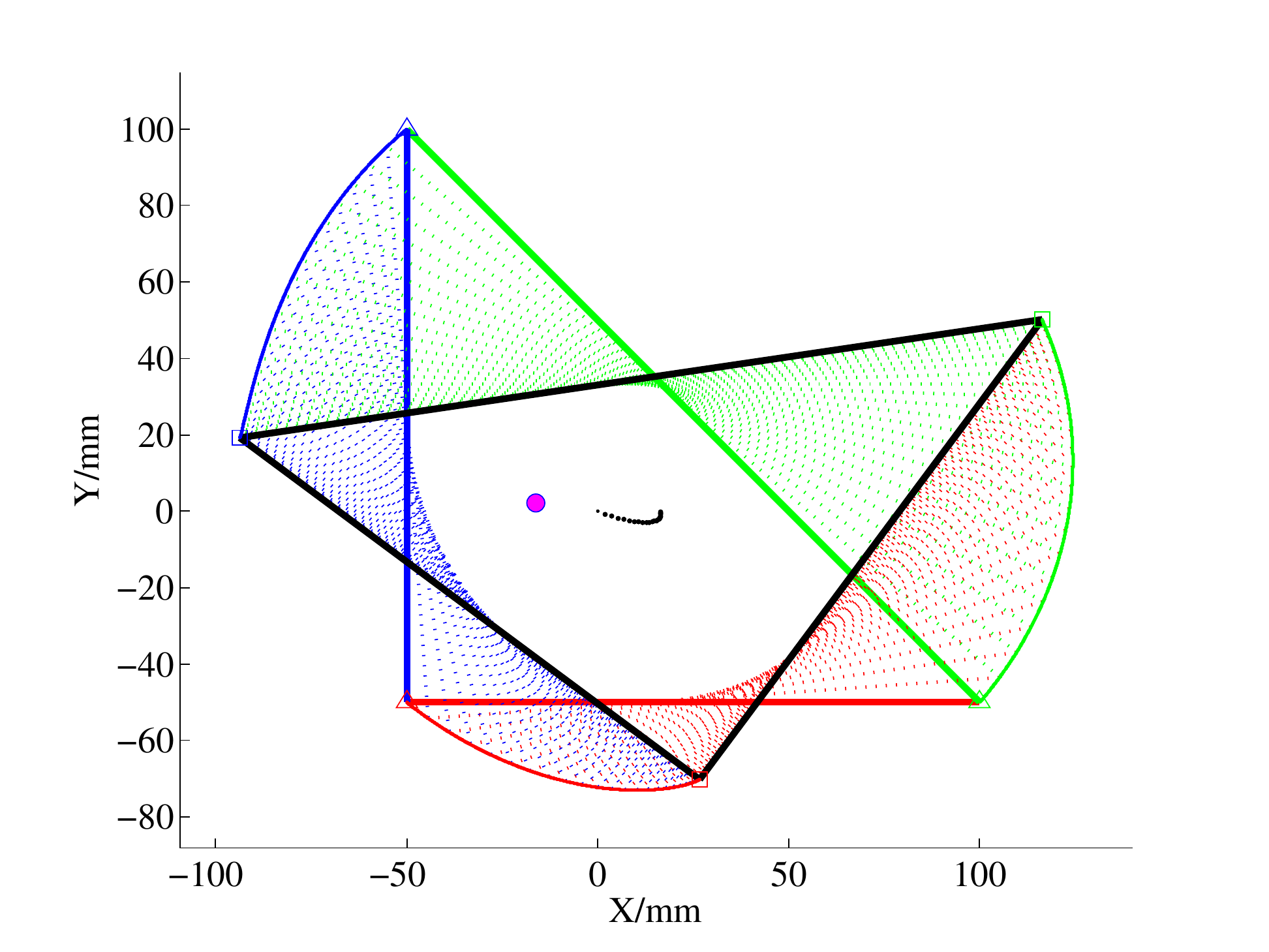}
\caption{\small Trajectory with $\mathbf{V}(0)$ equals (250mm/s, -100mm/s, 3$\pi$rad/s). }
\label{fig:dyn2}
\end{subfigure} 
~
\begin{subfigure}{\mylen}
\centering
\includegraphics[width=1.6in]{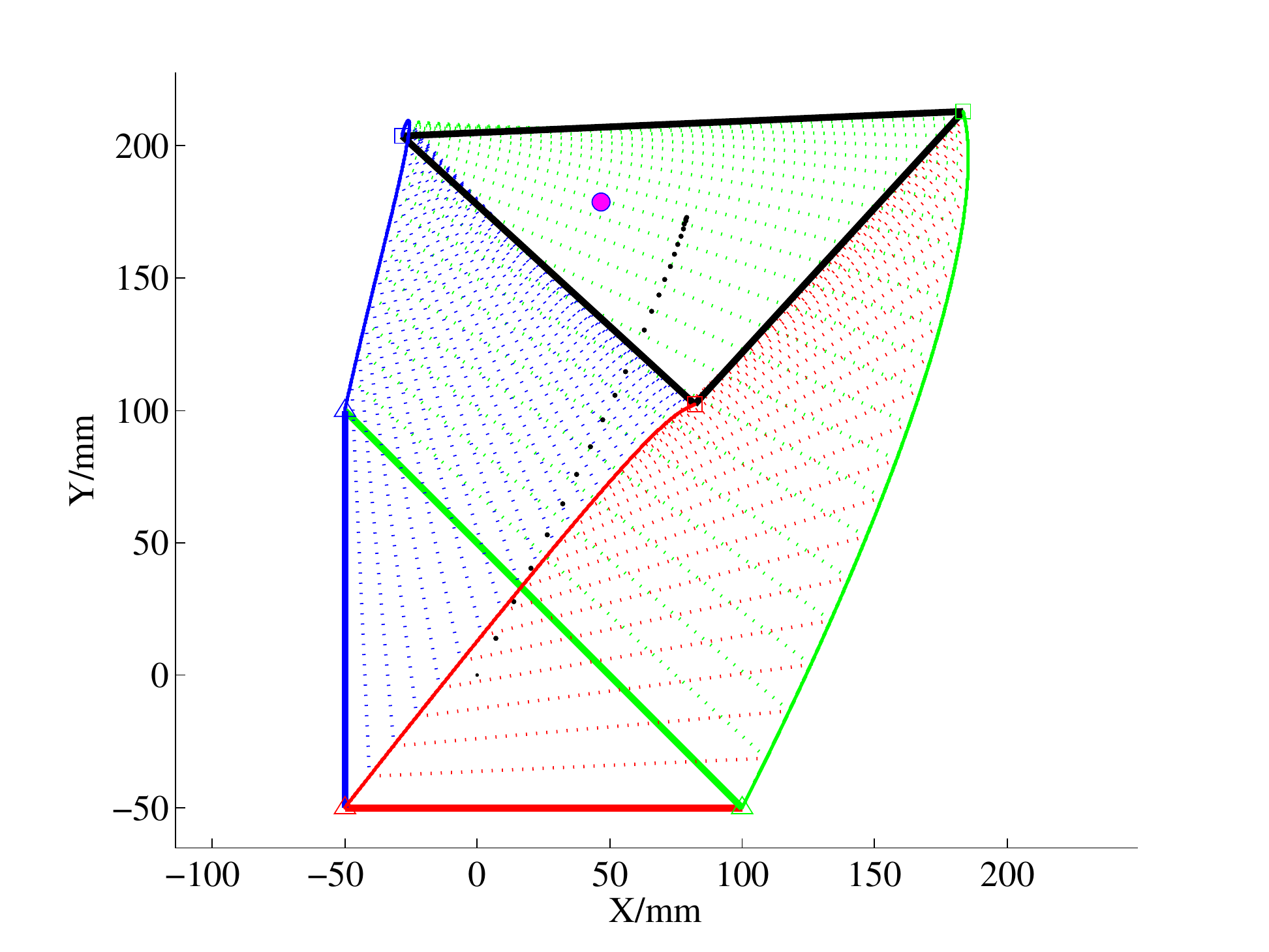}
\caption{\small Trajectory with $\mathbf{V}(0)$ equals (500mm/s, 100mm/s, $\pi$rad/s).}
\label{fig:dyn3}
\end{subfigure} 
~
\begin{subfigure}{\mylen}
\centering
\includegraphics[width=1.6in]{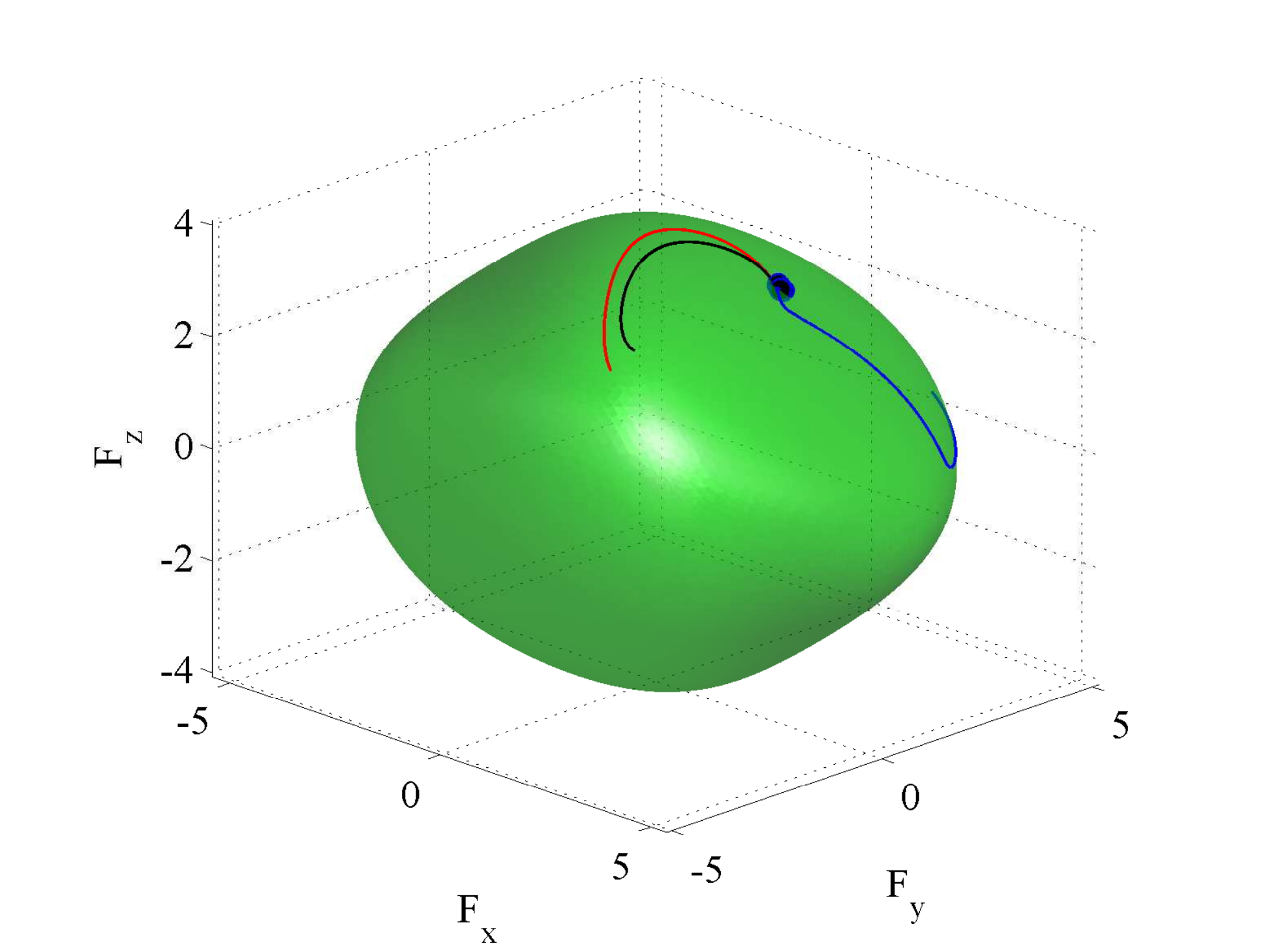}
\caption{\small Trajectories of generalized friction loads.}
\label{fig:dynlc}
\end{subfigure} 

\caption{\small Dynamic simulations based on a polynomial model using 10 training and 10 validation data for a triangular object with three support points as in Fig. \ref{fig:robot_lc_tri}. 
Fig. \ref{fig:dyn1}, \ref{fig:dyn2}, and \ref{fig:dyn3} illustrate the
sliding trajectories for different initial velocities. 
The rgb and black triangles are the initial and final poses respectively, with dotted triangles
as intermediate poses. Black dots traces correspond to
the trajectories of center of mass. Magenta circles are the
instantaneous rotation centers at the final time steps. 
Trajectories of the generalized friction loads are shown in Fig. \ref{fig:dynlc},
where red, black, and blue curves correspond to Fig. \ref{fig:dyn1}, \ref{fig:dyn2}, and
\ref{fig:dyn3} respectively.}
\label{fig:dyn}
\vspace{-0.15in}
\end{figure}

\section{CONCLUSION AND FUTURE WORK}
In this paper, we propose to use the sub-level sets and gradients of a
function to represent rigid body planar friction loads and velocities,
respectively.
The maximum work inequality implies that such a function needs to be
convex. We additionally require the properties of symmetry, scale
invariance, and efficient invertibility which lead us to choose 
a convex even-degree homogeneous polynomial representation. 
We apply the representation to applications including stable pushing and dynamic simulation. 
For future work, we plan to evaluate the model on a larger dataset with varying object and surface material physical properties.
We will also explore methods for online model
identification. 


\section*{ACKNOWLEDGMENT}
This work was conducted in part through collaborative participation in
the Robotics Consortium sponsored by the U.S Army Research Laboratory
under the Collaborative Technology Alliance Program, Cooperative
Agreement W911NF-10-2-0016 and National Science Foundation IIS-1409003. The views and conclusions contained in this document are those of the authors and should not be interpreted as representing the official policies, either expressed or implied, of the Army Research Laboratory of the U.S. Government. The U.S. Government is authorized to reproduce and distribute reprints for Government purposes notwithstanding any copyright notation herein.
\bibliographystyle{ieeetr}
{\footnotesize
\bibliography{ref}}%

\end{document}